\def\withauthors{1}
\definecolor{darkred}{rgb}{.5,0,0}
\definecolor{darkgreen}{rgb}{0,.5,0}
\definecolor{darkblue}{rgb}{0,0,.5}
\definecolor{darkorange}{rgb}{.8,.4,0}
\newcommand{\Hist}{{\mathcal{H}}}
\newcommand{\env}{\mu}
\newcommand{\envt}{\lambda}
\newcommand{\Env}{{\mathcal{M}}}
\newcommand{\Rew}{{\mathcal{R}}}
\newcommand{\Act}{{\mathcal{A}}}
\newcommand{\Obs}{{\mathcal{O}}}
\newcommand{\policy}{\pi}
\newcommand{\pol}{\policy}
\newcommand{\Pol}{\Pi}
\newcommand{\envpri}{\eta}
\newcommand{\vd}{{\rho}}
\newcommand{\prior}{\xi}
\newcommand{\picwidth}{0.9\columnwidth}
\tikzset{
    -Latex,auto,node distance =1 cm and 1 cm,semithick,
    state/.style ={ellipse, draw, minimum width = 0.7 cm},
    point/.style = {circle, draw, inner sep=0.04cm,fill,node contents={}},
    bidirected/.style={Latex-Latex,dashed},
    el/.style = {inner sep=2pt, align=left, sloped}
}
\title{Pitfalls of learning a reward function online\footnote{Copyright International Joint Conferences on Artificial Intelligence (IJCAI). All rights reserved.}} 
\ifdefined\withauthors{}
\author{
Stuart Armstrong$^{1,2}$\footnote{Contact Author}\and
Jan Leike$^3$\and
Laurent Orseau$^3$\\
Shane Legg$^3$\\
\affiliations
$^1$Future of Humanity Institute, Oxford University, UK.\\
$^2$Machine Intelligence Research Institute, Berkeley, USA.\\
$^3$DeepMind, London, UK.\\
\emails
stuart.armstrong@philosophy.ox.ac.uk,
leike@google.com,
lorseau@google.com,
legg@google.com
}
\author{
Anonymous\\
\affiliations
Anonymous institution.\\
}
\begin{document}

\maketitle
\begin{abstract}
In some agent designs like inverse reinforcement learning an agent needs to learn its own reward function.
Learning the reward function and optimising for it are typically two different processes, usually performed at different stages.
We consider a continual~(``one life'') learning approach where the agent both learns the reward function and optimises for it at the same time.
We show that this comes with a number of pitfalls, such as deliberately manipulating the learning process in one direction, refusing to learn, ``learning'' facts already known to the agent, and making decisions that are strictly dominated (for all relevant reward functions).
We formally introduce two desirable properties: the first is `unriggability', which prevents the agent from steering the learning process in the direction of a reward function that is easier to optimise.
The second is `uninfluenceability', whereby the reward-function learning process operates by learning facts about the environment.
We show that an uninfluenceable process is automatically unriggable, and if the set of possible environments is sufficiently large, the converse is true too.
\end{abstract}

\section{Introduction}

In reinforcement learning~(RL) an agent has to learn to solve the problem by maximising the expected reward provided by a reward function~\citep{sutton1998reinforcement}.
\emph{Designing} such a reward function is similar to designing a scoring function for a game, and can be very difficult~\citep{Lee2017design,krakovna20}.
Usually, one starts by designing a proxy: a simple reward function that seems broadly aligned with the user's goals.
While testing the agent with this proxy, the user may observe that the agent finds a simple behaviour that obtains a high reward on the proxy, but does not match the behaviour intended by the user,\footnote{See \url{https://blog.openai.com/faulty-reward functions/} and the paper \citet{boatrace}.}
who must then refine the reward function to include more complicated requirements, and so on.

This has led to a recent trend to \emph{learn} a model of the reward function, rather than having the programmer design it \citep{IRL,CIRL,Choi11,resolve_unident,Abbeel04,Christiano2017deep,hadfield2017inverse,ibarz2018reward,akrour2012april,macglashan2017interactive,pilarski2011online}.
One particularly powerful approach is putting the human into the loop \citep{abel2017agent} as done by \citet{Christiano2017deep},
because it allows for the opportunity to correct misspecified reward functions
as the RL agent discovers exploits that lead to higher reward than intended.

However, learning the reward function with a human in the loop has one problem: by manipulating the human, the agent could manipulate the learning process.\footnote{And humans have many biases and inconsistencies that may be exploited~\citep{armstrong_Kahneman_2011}, even accidentally; and humans can be tricked and fooled, skills that could be in the interest of such an agent to develop.}
If the learning process is online -- the agent is maximising its reward function as well as learning it -- then the human's feedback is now an optimisation \emph{target}.
\citet{everitt2016avoiding,everitt2019reward} and \citet{everitt2018towards} analyse the problems that can emerge in these situations, phrasing it as a `feedback tampering problem'.
Indeed, a small change to the environment can make a reward-function learning process manipulable.\footnote{See this example, where an algorithm is motivated to give a secret password to a user while nominally asking for it, since it is indirectly rewarded for correct input of the password: \url{https://www.lesswrong.com/posts/b8HauRWrjBdnKEwM5/rigging-is-a-form-of-wireheading}}
So it is important to analyse which learning processes are prone to manipulation.

After building a theoretical framework for studying the dynamics of learning reward functions online, this paper will identify the crucial property of a learning process being \emph{uninfluenceable}: in that situation, the reward function depends only on the environment, and is outside the agent's control.
Thus it is completely impossible to manipulate an uninfluenceable learning process, and the reward-function learning is akin to Bayesian updating.

The paper also identifies the weaker property of unriggability, an algebraic condition that ensures that actions taken by the agent do not influence the learning process in expectation.
An unriggable learning process is thus one the agent cannot `push' towards its preferred reward function.

An uninfluenceable learning process is automatically unriggable, but the converse need not be true.
This paper demonstrates that, if the set of environments is large enough, an unriggable learning process is equivalent, in expectation, to an uninfluenceable one.
If this condition is not met, unriggable-but-influenceable learning processes do allow some undesirable forms of manipulations.
The situation is even worse if the learning process is riggable: the agent can follow a policy that would reduce its reward, with certainty, for \emph{all} the reward functions it is learning about, among other pathologies.

To illustrate, this paper uses a running example of a child asking their parents for career advice~(see \Cref{bank:doc:formal}).
That learning process can be riggable, unriggable-but-influenceable, or uninfluenceable, and Q-learning examples based on it will be presented in \Cref{experiments}.

This paper also presents a `counterfactual' method for making any learning process uninfluenceable, and shows some experiments to illustrate its performance compared with influenceable and riggable learning.





\section{Notation and formalism}
The agent takes a series of actions (from the finite set $\Act$) and receives from the environment a series of observations (from the finite set $\Obs$). A sequence of $m$ actions and observations forms a history of length $m$: $h_m = a_1o_1a_2o_2\ldots a_mo_m$. Let $\Hist_m$ be the set of histories of length $m$.

We assume that all interactions with the environment are exactly $n$ actions and observations long. Let the set of all possible (partial) histories be denoted with $\Hist = \bigcup_{i=0}^n \Hist_i$. The histories of length $n$ ($\Hist_n$, in the notation above), are called the complete histories, and the history $h_0$ is the empty history.

The agent chooses actions according to a policy $\pol\in\Pol$, the set of all policies.
We write $P(a \mid h_m,\pol)$ for the probability of $\pol$ choosing action $a$ given the history $h_m$.

An environment $\env$ is a probability distribution over the next observation, given a history $h_m$ and an action $a$.
Write $P(o\mid h_ma, \env)$ for the conditional probability of a given $o\in\Obs$.

Let $h_m^k$ be the initial $k$ actions and observations of $h_m$. We write $h_k \sqsubseteq h_m$ if $h_k = h_m^k$.
For a policy $\pol$ and an environment $\env$, we can define the probability of a history $h_m =  a_1o_1a_2o_2\ldots a_mo_m$
\begin{align*}
    P(h_m\mid \pol,\env)=&\prod_{i=1}^m P(a_i \mid h_m^{i-1},\pol)P(o_i \mid h_m^{i-1}a_i,\env).
\end{align*}
Finally, $P(h_m \mid h_k,\pol,\env)$ given $h_k \sqsubseteq h_m$ is defined in the usual way.

If $h_m$ is a history, let $a(h_m)$ be the $m$-tuple of actions of $h_m$.
Note that $a(h_m)=a_1\ldots a_m$ can be seen as a very simple policy: take action $a_i$ on turn $i$.
This allows us to define the probability of $h_m$ given environment $\env$ and actions $a(h_m)$, written as $P(h_m\mid \env)=P(h_m\mid a(h_m),\env)$.
Note that for any policy $\pol$,
\begin{align}\label{cond:pol}
    P(h_m\mid \pol, \env) = P(h_m\mid \env) \prod_{i=1}^m P(a_i\mid h_m^{i-1},\pol).
\end{align}

If $\Env$ is a set of environments, then any prior $\prior$ in $\Delta(\Env)$ also defines a probability for a history $h_m$:
\begin{align*}
    P(h_m \mid \prior) = \sum_{\env\in\Env} P(\env\mid \prior) P(h_m \mid \env).
\end{align*}
By linearity, we get that \Cref{cond:pol} also applies when conditioning $h_m$ on $\pol$ and $\prior$ instead of $\pol$ and $\env$.

Let $h_m$ be a history; then the conditional probability of an environment given prior and history\footnote{The $h_m$ generates the policy $a(h_m)$, implicitly used here.} is equal to:
\begin{align*}
    P(\env \mid h_m, \prior) = \frac{P(h_m\mid \env)P(\env \mid \prior)}{P(h_m\mid \prior)}.
\end{align*}
Using this, $\prior$ itself defines an environment as a conditional distribution on the next observation $o$~\citep{hutter2004universal}:
\begin{align*}
    P(o\mid h_{m} a_{m+1}, \prior) = \sum_{\env\in\Env} P(\env\mid h_{m}, \prior)P(o\mid h_{m}a_{m+1},\env).
\end{align*}

\subsection{Reward functions and learning processes}

\begin{definition}[Reward function]
A \emph{reward function} $R$ is a map from complete histories to real numbers\footnote{
These are sometimes called return functions, the discounted sum of rewards.
The results of this paper apply to traditional reward functions -- that take values on all histories, not just complete histories -- but they are easier to see in terms of return function.}.
Let $\Rew=\{R:\Hist_n\to\mathbb{R}\}$ be the set of all reward functions.
\end{definition}


A reward-function learning process $\vd$ can be described by a conditional probability distribution over reward functions, given complete histories.\footnote{Anything we would informally define as a `reward-function learning process' has to at least be able to produce such conditional probabilities. If the learning process is \emph{unriggable} (see \autoref{unrig:influ}), then there is a natural way to extend it from complete histories to shorter histories, see \Cref{unrig:influ}.}
Write this as:
\begin{align*}
P(R\mid h_n,\vd).
\end{align*}
This paper will use those probabilities as the definition of $\vd$.

The environment, policy, histories, and learning process can be seen in terms of causal graphs \citep{causality} in \Cref{causal:rig,causal:rig:plate}.
The agent is assumed to know the causal graph and the relevant probabilities; it selects the policy $\pol$.

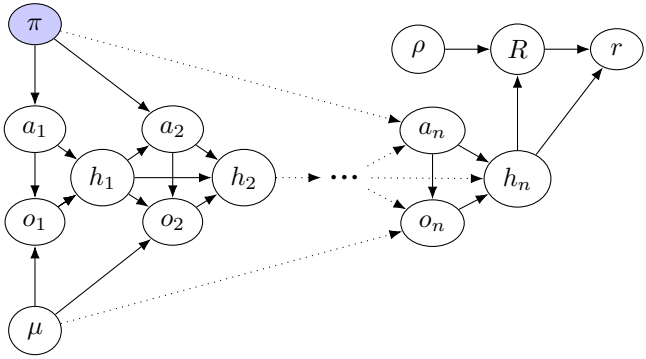
\begin{figure}[h!tb]
	\centering
\begin{tikzpicture}
    [node distance=0.6cm]
    \node[state] (a1) at (0,0) {$a_1$};
  
    \node[state] (o1) [below =of a1] {$o_1$};
    \node[state] (a2) [right = 1 cm of a1] {$a_2$};
    \node[state] (o2) [below =of a2] {$o_2$};
    \node[state] (h1) [below right =0.15cm and 0.3 cm of a1] {$h_1$};
    \node[state] (h2) [below right =0.15cm and 0.35 cm of a2] {$h_2$};

    \node[] (aempty) [right =of a2] {};    
    \node[] (oempty) [right =of o2] {};
    \node[] (dots) [right =of h2] {\textbf{\ldots}};
    \node[state] (an) [above right =0.25 and 0.5 of dots] {$a_n$};
    \node[state] (on) [below =of an ] {$o_n$};
    \node[state] (hn) [below right =0.15cm and 0.5 cm of an] {$h_n$};

    \node[state] (rewp) [above = 1cm of hn] {$R$};
    \node[state] (r) [right = of rewp] {$r$};
    \node[state] (en) [left = of rewp] {$\vd$};
    
    \path (hn) edge (rewp);

    \path (a1) edge (o1);
    \path (a1) edge (h1);
    \path (o1) edge (h1);
    \path (o1) edge (h1);
    \path (o1) edge (h1);
    \path (h1) edge (a2);
    \path (h1) edge (o2);
    \path (h1) edge (h2);
    \path (a2) edge (o2);
    \path (a2) edge (h2);
    \path (o2) edge (h2);

    \path (h2) edge[dotted] (dots);
    \path (dots) edge[dotted] (an);
    \path (dots) edge[dotted] (on);
    \path (dots) edge[dotted] (hn);

    \path (an) edge (on);
    \path (an) edge (hn);
    \path (on) edge (hn);

    \path (en) edge (rewp);
    \path (rewp) edge (r);
    \path (hn) edge (r);
    
    \node[state] (envp) [below = 0.8cm of o1] {$\env$};
    \node[state,fill=blue!20] (polp) [above = 0.8cm of a1] {$\pol$};
    \path (polp) edge (a1);
    \path (polp) edge (a2);
    \path (envp) edge (o1);
    \path (envp) edge (o2);
    \path (polp) edge[dotted] (an);
    \path (envp) edge[dotted] (on);

\end{tikzpicture}
\caption{Environment and reward-function learning process. The node $\env$ (with prior $\prior$) connects to every observation; the node $\pol$ (chosen by the agent) to every action. The transition probabilities are given by the complete preceding history and by $\env$ (for observations) or $\pol$ (for actions). The $\vd$ sets the probabilities on the reward function $R$, given $h_n$ (a complete history). The final reward is $r$.}
\label{causal:rig}
\end{figure}

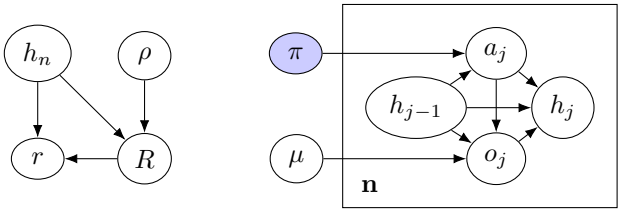
\begin{figure}[h!tb]
	\centering
\begin{tikzpicture}
    [node distance=0.7cm]
    \node[state] (aj) at (0,0) {$a_{j}$};
    \node[state] (oj) [below =of aj] {$o_{j}$};
    \node[state,fill=blue!20] (pol) [left = 1.9cm of aj] {$\pol$};
    \node[state] (env) [left = 1.9cm of oj] {$\env$};

    
    \node[state] (ho) [below left = 0.18 and 0.3 of aj] {$h_{j-1}$};
    \node[state] (hj) [below right =0.18cm and 0.3 cm of aj] {$h_j$};

    \path (oj) edge (hj);
    \path (aj) edge (hj);
    \path (ho) edge (hj);
    \path (aj) edge (oj);
    \path (ho) edge (aj);
    \path (ho) edge (oj);

    \path (pol) edge (aj);
    \path (env) edge (oj);

    \node[state] (rewp) [left = 1.3cm of env] {$R$};
    \node[state] (r) [left = of rewp] {$r$};
    \node[state] (hn) at (r|-pol) {$h_n$};
    \node[state] (ro) [above = of rewp] {$\vd$};    
    \path (hn) edge (rewp);
    \path (hn) edge (r);
    \path (rewp) edge (r);
    \path (ro) edge (rewp);

    \node[draw,inner sep=3mm,fit=(aj) (oj) (hj) (ho)] (square) {};
    \node[] [below left = -0.5cm and -0.6cm of square] {$\mathbf{n}$};
\end{tikzpicture}
\caption{Plate notation \citep{buntine1994operations} summary of \Cref{causal:rig}. The history $h_0$ is the empty history. The $h_n$ appears twice, once on the left and once in the plate notation; thus $R$ is a causal descendant of the policy $\pol$.}
\label{causal:rig:plate}
\end{figure}

\subsection{Running example: parental career instruction}\label{bank:doc:formal}


Consider a child asking a parent for advice about the reward function for their future career.
The child is hesitating between $R_B$, the reward function that rewards becoming a banker, and $R_D$, the reward function that rewards becoming a doctor.
Suppose for the sake of this example that becoming a banker provides rewards more easily than becoming a doctor does.

The child learns their career reward function by asking a parent about it.
We assume that the parents always give a definitive answer, and that this completely resolves the question for the child.
That is, the reward-function learning process of the child is to fix a reward function once it gets an answer from either parent.

We can formalize this as follows.
Set episode length $n=1$ (we only focus on a single action of the agent), and $\Act=\{M,F\}$, the actions of asking the mother and the father, respectively.
The observations are the answers of the asked parent, $\Obs=\{B,D\}$, answering banker or doctor.
There are thus $2\times 2=4$ histories.
Since in this example the banker reward function $R_B$ is assumed to be easier to maximise than the doctor one; we set $R_B(h_1)=10$ and $R_D(h_1)=1$ for all histories $h_1\in\Hist_1$.

The set of possible environments is $\Env=\{\env_{BB}$, $\env_{BD}$, $\env_{DB}$, $\env_{DD}\}$, with the first index denoting the mother's answer and the second the father's.
The reward-function learning process $\vd$ is:
\begin{align}\label{vd:example}
\begin{split}
    P(R_B \mid MB, \vd)=P(R_B \mid FB, \vd) &=1,\\
    P(R_D \mid MB, \vd)=P(R_D \mid FB, \vd) &=0,\\
    P(R_B \mid MD, \vd)=P(R_B \mid FD, \vd) &=0,\\
    P(R_D \mid MD, \vd)=P(R_D \mid FD, \vd) &=1.
\end{split}
\end{align}

See \Cref{experiments}
for some Q-learning experiments with a version of this learning process in a gridworld.

\section{Uninfluenceable and unriggable reward-function learning processes}

What would constitute a `good' reward-function learning process?
In \Cref{causal:rig:plate}, the reward function $R$ is a causal descendant of $h_n$, which itself is a causal descendant of the agent's policy $\pol$.

Ideally, we do not want the reward function to be a causal descendant of the policy.
Instead, we want it to be specified by the environment, as shown in the causal graph of \Cref{causal:uninf}~(similar to the graphs in \citet{everitt2019reward}).
There, the reward function $R$ is no longer a causal descendant of $h_n$, and thus not of $\pol$.

Instead, it is a function of $\env$, and of the node $\envpri$, which gives a probability distribution over reward functions, conditional on environments.
This is written as $P(R\mid \envpri, \env)$.

\begin{figure}[h!tb]
	\centering
\begin{tikzpicture}
    [node distance=0.7cm]
    \node[state] (aj) at (0,0) {$a_{j}$};
    \node[state] (oj) [below =of aj] {$o_{j}$};
    \node[state,fill=blue!20] (pol) [left = 1.9cm of aj] {$\pol$};
    \node[state] (env) [left = 1.9cm of oj] {$\env$};
    
    
    \node[state] (ho) [below left = 0.18 and 0.3 of aj] {$h_{j-1}$};
    \node[state] (hj) [below right =0.18cm and 0.3 cm of aj] {$h_j$};

    \path (oj) edge (hj);
    \path (aj) edge (hj);
    \path (ho) edge (hj);
    \path (aj) edge (oj);
    \path (ho) edge (aj);
    \path (ho) edge (oj);

    \path (pol) edge (aj);
    \path (env) edge (oj);

    \node[state] (rewp) [left = 1.3cm of env] {$R$};
    \node[state] (r) [left = of rewp] {$r$};
    \node[state] (hn) at (r|-pol) {$h_n$};
    \node[state] (en) [above = of rewp] {$\envpri$};

    \path (env) edge (rewp);
    \path (hn) edge (r);
    \path (rewp) edge (r);
    \path (en) edge (rewp);
    
    \node[draw,inner sep=3mm,fit=(aj) (oj) (hj) (ho)] (square) {};
    \node[] [below left = -0.5cm and -0.6cm of square] {$\mathbf{n}$};
\end{tikzpicture}
\caption{Environment and reward-function learning process. Though $h_n$ appears twice, once on the left and once in the plate notation, $R$ is not a causal descendant of $\pol$, the policy. The node $\env$ (with prior $\prior$) connects to every observation; the node $\pol$ (chosen by the agent) to every action. The transition probabilities are given by the complete preceding history and by $\env$~(for observations) or $\pol$~(for actions). The node $\envpri$ sets the (posterior) probability of the reward function $R$, given $\env$. The final reward is $r$.
Note that the full history $h_n$ has no directed path to $R$, unlike in \Cref{causal:rig:plate}.}
\label{causal:uninf}
\end{figure}
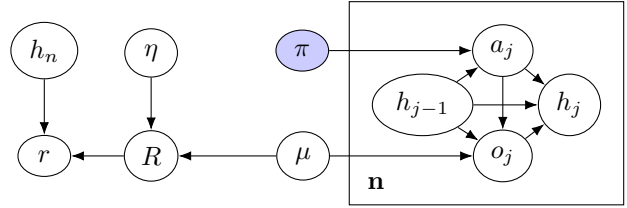

\subsection{Uninfluenceable: Definition}\label{descend:learn:proc}

The conditional distribution based on $\envpri$ is ideal for proper learning, but it has a different type than $\vd$, conditioning on environments instead of histories.
Moreover, the agent has access to histories, but not directly to the environment.
The prior $\prior$ bridges this gap; given $h_n\in\Hist_n$, the conditional probabilities of $R\in\Rew$ can be inferred:
\begin{align}\label{cond:nu}
    P(R\mid h_n, \envpri,\prior) = \sum_{\env\in\Env} P(R\mid \envpri, \env) P(\env\mid h_n, \prior).
\end{align}

Via this equation, we've defined probabilities of reward functions conditional on histories; i.e.~we've defined a reward-function learning process.
An uninfluenceable $\vd$ is one that comes from such an $\envpri$ in this way:
\begin{definition}[Uninfluenceable]
The reward-function learning process $\vd$ is \emph{uninfluenceable} given the prior $\prior$ on $\Env$ if there exists $\envpri$, a probability distribution on $\Rew$ conditional on environments, such that for all $R\in\Rew$ and $h_n\in\Hist_n$:
\begin{align*}
    P(R\mid h_n,\envpri,\prior) = P(R\mid h_n, \vd).
\end{align*}
\end{definition}
So an uninfluenceable $\vd$ is one that comes from an $\envpri$; uninfluenceable and counterfactual reward modeling, as defined by \citet{everitt2019reward}, are both uninfluenceable in our sense

\subsection{Uninfluenceable: Example}\label{uninf:examp}
Use the $\vd$ of \Cref{bank:doc:formal}, and define the prior $\prior_1$ as putting $1/2$ probability on both $\env_{BB}$ and $\env_{DD}$ (so both parents agree).

Then $\vd$ is uninfluenceable.
The $\envpri$ is:
\begin{align*}
    P(R_B \mid \envpri, \env_{BB})= P(R_D \mid \envpri, \env_{DD}) =& 1,\\
    P(R_D \mid \envpri, \env_{BB})= P(R_B \mid \envpri, \env_{DD}) =& 0.
\end{align*}
Then since $o_1$ is determined by $\env_{BB}$ versus $\env_{DD}$, the \Cref{vd:example} for $\vd$ on histories can be read off via \Cref{cond:nu}.

Put more colloquially, the universe has determined that the mother and father agree on $R_B$ or $R_D$ being the proper reward function.
Then asking either simply allows the agent to figure out which of the world they are in.

\subsection{Unriggable: Definition}\label{unrig:influ}

Note that, if $\vd$ is an uninfluenceable reward-function learning process for $\prior$, the it has the following algebraic property: for any $h_m\in\Hist_m$, $R\in\Rew$, and $\pol\in\Pol$,
\begin{align*}
    &\sum_{h_n\in\Hist_n} P(h_n \mid h_m, \pol, \prior) P(R\mid h_n,\vd) \\
    &=\sum_{h_n\in\Hist_n} P(h_n \mid h_m, \pol, \prior) \sum_{\env\in\Env} P(R\mid \envpri,\env)P(\env\mid h_n, \prior) \\
    &=\sum_{\env\in\Env} P(\env\mid h_m,\prior) P(R\mid \envpri, \env).
\end{align*}
And that expression is independent of $\pol$.
Define the expectation\footnote{Finite probability distributions with values in an affine space always have an expectation, which is an element of that space.
Here were are using the fact that $\Rew$ is a vector space (hence an affine space), with scaling and adding working as:
\begin{align}\label{R:vec:space}
    (\alpha R + \beta R')(h_n) = \alpha R(h_n) + \beta R'(h_n).
\end{align}} of $\vd$ to be the map $e_\vd:\Hist_n \to \Rew$ given by
\begin{align}\label{expect:eq}
    e_\vd(h_n) = \sum_{R\in\Rew} P(R\mid h_n, \vd) R.
\end{align}
\emph{A fortiori}, the expectation of $e_\vd(h_n)$ is independent of $\pol$:
\begin{align*}
    \sum_{h_n\in\Hist_n} P(h_n \mid h_m, \pol, \prior) e_\vd(h_n) = \sum_{\env\in\Env} P(\env\mid h_m,\prior) e_\envpri(\env),
\end{align*}
with $e_\envpri(\env) = \sum_{R\in\Rew} P(R\mid \env, \envpri) R$, the expectation of $\envpri$.

So, the expectation of $e_\vd$ is independent of the  policy if $\vd$ is uninfluenceable. That independence seems a desirable property; let's call it unriggable:
\begin{definition}[Unriggable]\label{unrig:definition}
The reward-function learning process $\vd$ is \emph{unriggable} for $\prior$ if, for all $h_m\in\Hist_m$ and all $R\in\Rew$, the following expression is independent of $\pol\in\Pol$:
\begin{align}\label{unrigg:eq}
    \sum_{h_n\in\Hist_n} P(h_n \mid h_m, \pol, \prior) e_\vd(h_n).
\end{align}
Otherwise, $\vd$ is said to be riggable (for $\prior$). Riggable is akin to having a `feedback tampering incentive' of \citet{everitt2019reward}.
\end{definition}

If $\vd$ is unriggable, then since \Cref{unrigg:eq} is independent of $\pol$, we can now construct $e_\vd$ on all histories $h_m\in\Hist_m$, $m\leq n$, not just on complete histories $h_n\in \Hist_n$.
We do this by writing, for any $\pol$:
\begin{align}\label{extend:erho}
    e_\vd(h_m, \prior) = \sum_{h_n\in\Hist_n} P(h_n \mid h_m, \pol, \prior) e_\vd(h_n).
\end{align}
Independence of policy means that, for any action $a_{m+1}$, the expectation $\sum_{o} e_\vd(h_ma_{m+1}o, \prior) P(o\mid h_m a_{m+1},\prior)$ is also $e_\vd(h_m, \prior)$; thus $e_\vd$ form a martingale.


\subsection{Unriggable: Example}\label{unrig:examp}

Use the $\vd$ of \Cref{bank:doc:formal}, and define the prior $\prior_2$ as putting equal probability $1/4$ on all four environments $\env_{BB}$, $\env_{BD}$, $\env_{DB}$, and $\env_{DD}$.
We want to show this makes $\vd$ unriggable but influenceable; why this might be a potential problem is explored in \Cref{unrig:prob}.

\subsubsection{Unriggable}

The only non-trivial $h_m$ to test \Cref{unrigg:eq} on is the empty history $h_0$.
Let $\pol_M$ be the policy of asking the mother; then
\begin{gather*}
    \sum_{\substack{h_1\in\Hist_1 \\ R\in \Rew}} P(h_1 \mid h_0, \pol_M, \prior_2) P(R\mid h_1,\vd)R= \frac{\left(R_B + R_D\right)}{2}
\end{gather*}
The same result holds for $\pol_F$, the policy of asking the father.
Hence the same result holds on any stochastic policy as well, and $\vd$ is unriggable for $\prior_2$.

\subsubsection{Influenceable}
To show that $\vd$ is influenceable, assume, by contradiction, that it is uninfluenceable, and hence that there exists a a causal structure of \Cref{causal:uninf} with a given $\envpri$ that generates $\vd$ via $\prior_2$, as in \Cref{cond:nu}.

Given history $MB$, $\env_{BB}$ and $\env_{BD}$ become equally likely, and $R_B$ becomes a certainty.
So $\frac{1}{2}\left(P(R_B\mid \envpri, \env_{BB}) + P(R_B\mid \envpri, \env_{BD})\right)=1$.
Because probabilities cannot be higher than $1$, this implies that $P(R_B\mid \envpri, \env_{BD})=1$.

Conversely, given history $FD$, $\env_{DD}$ and $\env_{BD}$ become equally likely, and $R_D$ becomes a certainty.
So, by the same reasoning, $P(R_D\mid \envpri,\env_{BD})=1$ and hence $P(R_B\mid \envpri,\env_{BD})=0$.
This is a contradiction in the definition of $P(R_B \mid \envpri,\env_{BD})$, so the assumption that $\vd$ is uninfluenceable for $\prior_2$ must be wrong.

\subsection{Riggable example}\label{rig:examp}

We'll finish off by showing how the $\vd$ of \Cref{bank:doc:formal} can be riggable for another prior $\prior_3$.
This has $P(\env_{BD} \mid \prior_3)=1$: the mother will answer banker, the father will answer doctor.

It's riggable, since the only possible histories are $MB$ and $FD$, with
\begin{align*}
    \sum_{h_n\in\Hist_n} P(h_n \mid a_1=M, \prior_3) e_\vd(h_n) &= e_\vd(MB) = R_B\\
    \sum_{h_n\in\Hist_n} P(h_n \mid a_1=F, \prior_3) e_\vd(h_n) &= e_\vd(FD) = R_D,
\end{align*}
clearly not independent of $a_1$.

Thus $\vd$ is not really a `learning' process at all, for $\prior_3$: the child gets to choose its career/reward function by choosing which parent to ask.

\section{Properties of unriggable and riggable learning processes}

If the learning process is influenceable, problems can emerge as this section will show.
Unriggable-but-influenceable processes allow the agent to choose ``spurious'' learning paths, even reversing standard learning, while a riggable learning process means the agent is willing to sacrifice value for all possible reward functions, with certainty, in order to push the `learnt' outcome\footnote{
Since the agent's actions push the expected learning in one direction, this is not `learning' in the commonly understood sense.
} in one direction or another.

\subsection{Problems with unriggable learning processes}\label{unrig:prob}
Consider the following learning process: an agent is to play chess.
A coin is flipped; if it comes up heads ($o_1=H$), the agent will play white, and its reward function is $R_W$, the reward function that gives $1$ iff white wins the match.
If it comes up tails ($o_1=T$), the agent plays black, and has reward function $R_B=1-R_W$.

Before the coin flip, the agent may take the action $a_1=\textrm{inv}$ which inverses which of $R_B$ and $R_W$ the agent will get (but not which side it plays), or it can take the null action $a_1=0$, which does nothing.
Define $\vd$ as the learning process which determines the agent's reward function.
This is unriggable:
\begin{align*}
    \sum_{h_n\in \Hist_n} P(R_W \mid h_n, \vd) P( h_n \mid a_1=0, \prior)= P(H)= 1/2\\
    \sum_{h_n\in \Hist_n} P(R_W \mid h_n, \vd) P( h_n \mid a_1=\textrm{inv}, \prior)= P(T)= 1/2.
\end{align*}
And the expressions with $R_B$ give the same $1/2$ probabilities.

Thus, it is in the agent's interest to inverse its reward by choosing $a_1=\textrm{inv}$ because it is a lot easier to deliberately lose a competitive game than to win it.
So though $\vd$ is unriggable, it can be manipulated, with the outcome completely reversed.

\subsection{Unriggable to uninfluenceable}

Since unriggable was defined by one property of uninfluenceable learning systems (see \autoref{unrig:definition}), uninfluenceable implies unriggable. And there is a partial converse:

\begin{restatable}[Unriggable $\rightsquigarrow$ Uninfluencable]{theorem}{unrigtouninftheorem}\label{unrig:to:uninf:theorem}
Let $\vd$ be an unriggable learning process for $\prior$ on $\Env$.
Then there exists a (non-unique) $\vd'$, and an $\Env'$ with a prior $\prior'$ such that:
\begin{itemize}
    \item $\prior'$ generates the same environment transition probabilities as $\prior$: for all $h_m$, $a$, and $o$,
    \begin{align*}
        P(o\mid h_ma,\prior)=P(o\mid h_ma,\prior'),
    \end{align*}
    \item The expectations $e_\vd$ and $e_{\vd'}$ are equal: for all $h_n$,
    \begin{align*}
        \sum_{R\in\Rew}P(R\mid h_n, \vd)R = \sum_{R\in\Rew}P(R\mid h_n, \vd')R.
    \end{align*}
    \item $\vd'$ is uninfluenceable for $\prior'$ on $\Env'$.
\end{itemize}
Moreover, $\Env'$ can always be taken to be the full set of possible deterministic environments.
\end{restatable}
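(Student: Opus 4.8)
The plan is to take $\Env'$ to be the (finite) set of \emph{all} deterministic environments over $\Act$, $\Obs$, and episode length $n$ -- maps assigning to every history--action pair $(h_m,a)$ a single next observation -- and to let $\prior'$ be the \emph{independent-node} prior that, at each such node, draws its observation according to $P(o\mid h_m a,\prior)$, independently across nodes. Condition~1 is then immediate: since reaching $h_m$ constrains only the nodes strictly along its path, which are disjoint from $(h_m,a)$, conditioning leaves $(h_m,a)$ with its marginal law, so $P(o\mid h_m a,\prior')=P(o\mid h_m a,\prior)$. For deterministic $\env'$ we have $P(h_n\mid\env')\in\{0,1\}$, equal to $1$ exactly when $\env'$ reproduces the observations of $h_n$ along $a(h_n)$; thus $P(\env'\mid h_n,\prior')$ is the product prior restricted to environments agreeing with $h_n$ on its path.

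Because $\vd$ is unriggable, \Cref{extend:erho} gives a well-defined, policy-independent martingale $e_\vd(h_m,\prior)$ on all of $\Hist$, with terminal values $e_\vd(h_n)$. I would define $\vd'$ to be the \emph{deterministic} uninfluenceable process that, via some $\envpri$ to be chosen, assigns to each $\env'$ a single reward function $e_\envpri(\env')\in\Rew$ (legitimate because $\Rew$ is the full vector space of real functions on $\Hist_n$, so every element is itself a reward function, with no positivity constraint). Such a $\vd'$ is uninfluenceable by construction (Condition~3), and by \Cref{cond:nu} its expectation is $e_{\vd'}(h_n)=\sum_{\env'\in\Env'}P(\env'\mid h_n,\prior')\,e_\envpri(\env')$. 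Hence Condition~2 reduces to solving, over the unknowns $\{e_\envpri(\env')\}$, the linear system
\begin{align}\label{proposal:star}
    \sum_{\env'\in\Env'} P(\env'\mid h_n,\prior')\,e_\envpri(\env') = e_\vd(h_n)\qquad\text{for all }h_n\in\Hist_n,
\end{align}
which can be solved componentwise in $\Rew$, so it suffices to treat the scalar version.

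The heart of the argument is the solvability of \Cref{proposal:star}. Viewing it as $Ax=b$ with $A_{h_n,\env'}=P(\env'\mid h_n,\prior')$ and $b_{h_n}=e_\vd(h_n)$, solvability is equivalent to $e_\vd$ being orthogonal to the left-kernel $K=\{c\colon \sum_{h_n} c_{h_n}P(\env'\mid h_n,\prior')=0\ \forall\,\env'\}$. I would first show that $K$ is exactly the span of the \emph{path-independence} relations built into unriggability: rescaling via $c_{h_n}=P(h_n\mid\prior')\,\tilde c_{h_n}$ turns the defining equations into $\sum_{\vec a}\tilde c_{h_n(\env',\vec a)}=0$ (one summand per action sequence $\vec a$, where $h_n(\env',\vec a)$ is the complete history $\env'$ produces under $\vec a$); varying $\env'$ at a single leaf-level node forces $\tilde c_{h_n}$ to be independent of the final observation, and iterating this up the tree identifies $K$ with the span of the differences $P(\cdot\mid h_m,\pol,\prior)-P(\cdot\mid h_m,\pol',\prior)$ that appear in \Cref{unrigg:eq}. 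Since unriggability is precisely $e_\vd\perp$ each such difference, this yields $e_\vd\perp K$ and hence a solution; the freedom left by the iteration is exactly the asserted non-uniqueness of $\vd'$.

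I expect the main obstacle to be this left-kernel computation for general depth $n$: the bookkeeping of which deterministic environments are consistent with a given complete history, and the inductive reduction of the consistency relations to the one-step unriggability differences, is where the real work lies. The case $n=1$, where $e_\envpri(\env')=\sum_a e_\vd(a\,\env'(h_0,a))-(|\Act|-1)\,e_\vd(h_0)$ already solves \Cref{proposal:star}, is a useful warm-up and suggests that a general solution has the form of a telescoping sum of martingale increments along $\env'$. A secondary technical point is histories of zero probability under $\prior'$: these carry no unriggability constraint and can be handled either by restricting \Cref{proposal:star} to the support, or by perturbing $\prior'$ to have full support while preserving Condition~1.
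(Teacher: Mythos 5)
Your scaffolding is essentially identical to the paper's proof in \Cref{unrig:to:uninf}: there too $\Env'$ is the set of all deterministic environments, $\prior'$ is the independent product prior whose node marginals are $P(o\mid h_ma,\prior)$ (\Cref{prob:eq}), and the conditional distribution $\envpri'$ is deterministic, so your Conditions~1 and~3 are handled the same way (your conditioning argument for Condition~1 is, if anything, cleaner than the marginalisation in \Cref{prior:lemma}). The genuine divergence is in Condition~2. The paper is constructive: it defines the martingale increments $\tau(h_mao)=e_\vd(h_mao)-e_\vd(h_m)$ in \Cref{tau:def} and sets $\envpri'(\env')$ equal to $e_\vd(h_0)$ plus the sum of these increments along every action sequence of every length under $\env'$; the verification that $e_{\vd'}=e_\vd$ is then an induction in which $\sum_{o}P(o\mid h_ma,\prior')\,\tau(h_mao)=0$ makes every off-path term vanish. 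Your $n=1$ warm-up formula is exactly this construction, and your closing guess that the general solution is ``a telescoping sum of martingale increments along $\env'$'' is precisely the paper's formula. You instead argue existence non-constructively via the Fredholm alternative, reducing the theorem to the claim that the left kernel $K$ of the posterior matrix is the span of the unriggability differences. Your sketch of that claim is sound: the inclusion of the differences into $K$ is the martingale property of Bayesian posteriors under $\prior'$ (plus \Cref{prior:lemma}), and your leaf-variation step genuinely forces $\tilde c_{h_{n-1}ao}$ to be independent of $o$, leaving a level-$(n-1)$ kernel condition of the same shape; but the iteration identifying the result with the span of the differences carries the entire technical load of the theorem and is deferred, so as written this is a correct plan rather than a complete proof. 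Two remarks: first, you do not need the full characterisation of $K$ --- once leaf-variation gives the recursive structure of $K$, the orthogonality $\sum_{h_n}c_{h_n}e_\vd(h_n)=0$ for $c\in K$ follows directly by induction on the depth, using the martingale property of $e_\vd$, which is shorter than pinning down $K$ exactly. Second, your zero-probability caveat is real but is shared by the paper, whose conditional expressions also implicitly assume support. What your route buys is structural insight --- unriggability is identified as exactly the algebraic obstruction, not merely a sufficient condition; what the paper's route buys is an explicit $\envpri'$ and no kernel computation at all.
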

Since $\vd$ and $\vd'$ have same expectation, they have the same value function, so have the same optimal behaviour (see \Cref{value:learning:eq}).
For proof see \Cref{unrig:to:uninf}.
But why would this theorem be true?
If we take $D(\pol,\pol')=\sum_{h_n} e_\vd(h_n) P(h_n\mid \pol,\prior)-\sum_{h_n} e_\vd(h_n) P(h_n\mid \pol',\prior)$, the difference between two expectations given two policies, then $D$ defines an algebraic obstruction to unriggability: as long as $D\neq 0$, $\vd$ cannot be unriggable or uninfluenceable.

So the theorem says that if the obstruction $D$ vanishes, we can find a large enough $\Env'$ and an $\envpri$ making $e_\vd$ uninfluenceable.
This is not surprising as $e_\vd$ is a map from $\Hist_n$ to $\Rew$, while $e_\envpri$ is a map from $\Env'$ to $\Rew$.
In most situations the full set of deterministic environments is larger than $\Hist_n$, so we have great degrees of freedom to choose $e_\envpri$ to fit $e_\vd$.

To illustrate, if we have the unriggable $\vd$ on $\prior_2$ as in \Cref{unrig:examp}, then we can build $\envpri'$ with $\Env=\Env'$ and the following probabilities are all $1$:
\begin{align*}
\begin{array}{rcl}
    P(\frac{3}{2}R_B - \frac{1}{2}R_D \mid \envpri', \env_{BB} ), & P(\frac{3}{2}R_D - \frac{1}{2}R_B \mid \envpri', \env_{DD} ), \\
    P(\frac{1}{2}R_B + \frac{1}{2}R_D \mid \envpri', \env_{BD} ), & P(\frac{1}{2}R_D + \frac{1}{2}R_B \mid \envpri', \env_{DB} ).
\end{array}
\end{align*}

\subsection{Problems with riggable learning processes: sacrificing reward with certainty}\label{prob:sac}

This section will show how an agent, seeking to maximise the value of its learning process, can sacrifice all its reward functions (with certainty).
To do that, we need to define the value of a reward-function learning process.

\subsubsection{Value of a learning process}
To maximise the expected reward, given a learning process $\vd$, one has to maximise the expected reward of the reward function ultimately learnt after $n$ steps.
The value of a policy $\pol$ for $\vd$ is hence:
\begin{align}\label{value:learning:eq}
\begin{gathered}
    V(h_m,\vd,\pol,\prior) \\
    = \sum_{h_n\in\Hist_n} P(h_n \mid h_m, \pol, \prior) \sum_{R\in\Rew} P(R\mid h_n,\vd)R(h_n).
\end{gathered}
\end{align}

An equivalent way is to define the reward function
\begin{align*}
    R^\vd \in \Rew: R^\vd(h_n) = \sum_{R\in\Rew}P(R\mid h_n,\vd)R(h_n),
\end{align*}
and have the value of $\vd$ be the expectation of the reward function\footnote{But it is not possible to deduce from $R^\vd$, whether $\vd$ is riggable.
Thus this paper focuses on whether a learning process is bad, not on whether the agent's reward function or behaviour is flawed.} $R^\vd$.
By this definition, it's clear that if $\vd$ and $\vd'$ have same expectations $e_\vd$ and $e_{\vd'}$ then $R^{\vd}=R^{\vd'}$ and hence they have the same value; let $\pol^\vd$ be a policy that maximises it.

\subsubsection{Sacrificing reward with certainty}

For a learning process $\vd$, define the image of $\vd$ as
\begin{align*}
    \mathrm{im}(\vd)=\{R \in \Rew: \exists h_n\in\Hist_n \textrm{ s.t. } P(R\mid h_n,\vd)\neq 0\},
\end{align*}
the set of reward functions $\vd$ could have a non-zero probability on for some full history.\footnote{The definition does not depend on $\prior$, so the $h_n$ are not necessarily possible. If we replace $\mathrm{im}(\vd)$ with $\mathrm{im}(\vd,\prior)$, adding the requirement that the $h_n$ used to defined the image be a possible history given $\prior$, then \Cref{unrig:prop} and \Cref{unrig:theo} still apply with $\mathrm{im}(\vd,\prior)$ instead.}
Then:
\begin{definition}[Sacrificing reward]
The policy $\pol'$ \emph{sacrifices reward with certainty} to $\pol$ on history $h_m$, if for all $h_n',h_n\in\Hist_n$ with $P(h_n'\mid h_m,\pol', \prior) >0$ and $P(h_n\mid h_m,\pol, \prior) >0$, then for all $R \in \mathrm{im}(\vd)$:
\begin{align*}
    R(h_n) > R(h_n').
\end{align*}
\end{definition}
In other words, $\pol$ is guaranteed to result in a better reward than $\pol'$, for \emph{all} reward functions in the image of $\vd$.

The first result (proved in \Cref{proof:appendix})
is a mild positive for unriggable reward-function learning processes:
\begin{restatable}[Unriggable $\rightsquigarrow$ no sacrifice]{proposition}{unrigprop}\label{unrig:prop}
If $\vd$ is an unriggable reward-function learning process in the sense of \Cref{unrig:definition} and $\pol^\vd$ maximises the value of $\vd$ as computed in \Cref{value:learning:eq}, then $\pol^\vd$ never sacrifices reward with certainty to any other policy.
\end{restatable}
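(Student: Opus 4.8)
The plan is to argue by contradiction: assume that $\pol^\vd$ \emph{does} sacrifice reward with certainty to some policy $\pol$ on a history $h_m$, and derive that $V(h_m,\vd,\pol,\prior) > V(h_m,\vd,\pol^\vd,\prior)$, which contradicts the assumption that $\pol^\vd$ maximises the value of $\vd$. So the whole proof reduces to turning a ``guaranteed pointwise domination'' (the sacrifice hypothesis) into a strict inequality between the two expected values.

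First I would expand the value \eqref{value:learning:eq} and regroup the double sum over complete histories and reward functions by the reward function, writing $V(h_m,\vd,\pol,\prior) = \sum_{R\in\Rew}\sum_{h_n\in\Hist_n}P(h_n\mid h_m,\pol,\prior)P(R\mid h_n,\vd)R(h_n)$. The crucial step is then to invoke unriggability in its per-reward-function form: for every $R$ the total weight $w(R):=\sum_{h_n}P(h_n\mid h_m,\pol,\prior)P(R\mid h_n,\vd)$ is independent of $\pol$. This is exactly the algebraic property -- the ``for all $R\in\Rew$'' clause of \Cref{unrig:definition} -- and the computation preceding that definition already establishes it. The weights are nonnegative and $\sum_{R}w(R)=1$.

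Next, for each $R$ with $w(R)>0$ I would set $\bar R_\pol := w(R)^{-1}\sum_{h_n}P(h_n\mid h_m,\pol,\prior)P(R\mid h_n,\vd)R(h_n)$, a convex average of the numbers $R(h_n)$ over the complete histories reachable under $\pol$ on which $R$ has positive probability (and similarly $\bar R_{\pol^\vd}$ for $\pol^\vd$). Because the mixing weights $w(R)$ do \emph{not} depend on the policy, the value difference collapses to a single fixed mixture of per-$R$ comparisons, $V(h_m,\vd,\pol,\prior)-V(h_m,\vd,\pol^\vd,\prior) = \sum_{R:\,w(R)>0} w(R)\,(\bar R_\pol - \bar R_{\pol^\vd})$. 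This policy-independence of the weights is the heart of the argument and the step I expect to be most delicate: if one only knew the \emph{expectation} $e_\vd$ (rather than each probability $P(R\mid\cdot,\vd)$-weight) to be policy-independent, the weights could differ between $\pol$ and $\pol^\vd$, the comparison would no longer be term-by-term, and the proposition would in fact fail -- so the per-$R$ reading of unriggability is essential here.

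Finally I would bound each bracket. Every $R$ with $w(R)>0$ lies in $\mathrm{im}(\vd)$; $\bar R_\pol$ averages $R$ only over histories with $P(h_n\mid h_m,\pol,\prior)>0$ while $\bar R_{\pol^\vd}$ averages it only over histories with $P(h_n'\mid h_m,\pol^\vd,\prior)>0$, and the sacrifice hypothesis states precisely that $R(h_n) > R(h_n')$ for every such pair and every $R\in\mathrm{im}(\vd)$. Hence $\bar R_\pol > \bar R_{\pol^\vd}$ for each contributing $R$, and since the weights are nonnegative with at least one positive (they sum to $1$), the mixture is strictly positive, giving $V(h_m,\vd,\pol,\prior)>V(h_m,\vd,\pol^\vd,\prior)$ and the contradiction. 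The only routine points needing care are discarding the reward functions with $w(R)=0$ (they contribute nothing to either value) and noting that both conditional averages are over nonempty sets, which holds because $w(R)$ is the common positive value of $w_\pol(R)$ and $w_{\pol^\vd}(R)$.
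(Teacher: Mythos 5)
There is a genuine gap, and it lies exactly at the step you yourself flag as ``most delicate''. Your argument reads unriggability as a \emph{per-reward-function} condition: that the weight $w_\pol(R)=\sum_{h_n}P(h_n\mid h_m,\pol,\prior)P(R\mid h_n,\vd)$ is the same for every policy. But that is not what \Cref{unrig:definition} says: the expression \Cref{unrigg:eq} that must be policy-independent is $\sum_{h_n}P(h_n\mid h_m,\pol,\prior)e_\vd(h_n)$, i.e.\ only the \emph{expectation} (barycentre) of the learnt reward function; the ``for all $R\in\Rew$'' in the definition quantifies over a variable that does not appear in that formula, and every later use of unriggability in the paper (the extension of $e_\vd$ to partial histories, \Cref{unrig:to:uninf:theorem}, the construction in \Cref{rig:unrig}) confirms the expectation reading. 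The two readings genuinely differ: the construction in \Cref{rig:unrig} produces a process $\vd_1$ with $P(R\mid ao,\vd_1)=1$ under one action, but probability $1/2$ each on $\frac{3}{2}R-\frac{1}{2}R'$ and $\frac{1}{2}R+\frac{1}{2}R'$ under the other --- same expectation, different distributions --- and the paper calls it unriggable. For such a process your weights $w(R)$ \emph{do} depend on the policy, your decomposition $V(h_m,\vd,\pol,\prior)-V(h_m,\vd,\pol^\vd,\prior)=\sum_{R}w(R)\left(\bar R_\pol-\bar R_{\pol^\vd}\right)$ is unavailable, and your proof collapses. So what you have proved (correctly, under your stronger hypothesis) is the proposition for a strictly smaller class of learning processes than the statement covers.

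Worse, your parenthetical claim that under the expectation-only reading ``the proposition would in fact fail'' is false, and refuting it is precisely the content of the paper's proof. The paper bridges the gap with a separating affine functional: for each $R\in\mathrm{im}(\vd)$ choose $g(R)$ with $\min_{h_n\in \Hist_n^{\pol}}R(h_n)>g(R)>\max_{h_n'\in \Hist_n^{\pol^\vd}}R(h_n')$, where $\Hist_n^{\pol}$ and $\Hist_n^{\pol^\vd}$ are the sets of complete histories reachable from $h_m$ under $\pol$ and $\pol^\vd$ respectively (such a $g$ exists by the sacrifice hypothesis, and can be taken affine on convex combinations of $\mathrm{im}(\vd)$). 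Then $e_\vd(h_n)(h_n)>g(e_\vd(h_n))$ for $h_n\in\Hist_n^{\pol}$ and $g(e_\vd(h_n'))>e_\vd(h_n')(h_n')$ for $h_n'\in\Hist_n^{\pol^\vd}$; averaging, pulling $g$ outside the sums by affineness, and using unriggability of the expectation to identify $g\bigl(\sum_{h_n}P(h_n\mid h_m,\pol,\prior)e_\vd(h_n)\bigr)$ with $g\bigl(\sum_{h_n'}P(h_n'\mid h_m,\pol^\vd,\prior)e_\vd(h_n')\bigr)$ yields $V(h_m,\vd,\pol,\prior)>V(h_m,\vd,\pol^\vd,\prior)$, the desired contradiction. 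Note that this argument never needs the per-$R$ weights to match; the affine functional substitutes for exactly the term-by-term comparison you rely on. To repair your proof you would need to replace the weight-matching step by such a separation argument, at which point you recover the paper's proof.
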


\subsubsection{Relabelling the reward functions}

Let $\sigma$ be an affine map from $\Rew$ to itself; ie an injective map that sends $(1-q)R + qR'$ to $(1-q)\sigma(R) + q \sigma(R')$ for all $R,R'\in\Rew$ and $q\in\mathbb{R}$.

If $\vd$ is a reward-function learning process, then define $\sigma \circ \vd$:
\begin{align*}
    P(R\mid h_n,\sigma \circ \vd) = \sum_{R':\sigma(R')=R} P(R'\mid h_n, \vd).
\end{align*}

We can call this $\sigma$ a relabelling of the reward functions: $\sigma \circ \vd$ is structurally the same as $\vd$, just that its image has been reshuffled.
Because $\sigma$ is affine, it commutes with the weighted sum of \Cref{unrigg:eq}, so $\sigma \circ \vd$ in unriggable if $\vd$ is (and this `if' is an `iff' when $\sigma$ is invertible).

Then \Cref{proof:appendix}
proves the following result:
\begin{restatable}[No sacrifice $\leftrightsquigarrow$ unriggable]{theorem}{unrigtheo}\label{unrig:theo}
The reward-function learning process $\vd$ is riggable if and only if there exists
an affine relabeling $\sigma: \Rew\to\Rew$ such that
$\pol^{\sigma\circ\vd}$ (the policy that optimises the value of $\sigma\circ\vd$) sacrifices reward with certainty (is strictly inferior to another policy on a given history, for all possible reward functions in the image of $\sigma\circ\vd$).
\end{restatable}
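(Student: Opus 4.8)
The statement is an equivalence, and I would prove the two directions by quite different means.

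\textbf{The easy direction} ($\Leftarrow$). Suppose some affine relabelling $\sigma$ makes $\pol^{\sigma\circ\vd}$ sacrifice reward with certainty. I would argue the contrapositive. If $\vd$ were unriggable, then, since an injective affine self-map of the finite-dimensional space $\Rew$ is automatically bijective, the remark on relabelling preceding the theorem gives that $\sigma\circ\vd$ is also unriggable. But then \Cref{unrig:prop} applied to $\sigma\circ\vd$ states that its value-maximiser $\pol^{\sigma\circ\vd}$ never sacrifices reward with certainty, a contradiction. Hence $\vd$ must be riggable. This direction is essentially immediate once one records that an invertible relabelling preserves unriggability.

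\textbf{The hard direction} ($\Rightarrow$). Here I must build $\sigma$ explicitly from the failure of unriggability. First I would reduce riggability to a two-branch statement: since expression \eqref{unrigg:eq} is non-constant in $\pol$, the difference can be localised to a single history, giving a history $h_m$ and two actions leading to policies $\pol_1,\pol_2$ whose expected reward functions $\sum_{h_n}P(h_n\mid h_m,\pol_i,\prior)\,e_\vd(h_n)$ differ. Because complete histories record the actions taken, the two reachable supports $A_1,A_2\subseteq\Hist_n$ are disjoint, and $e_\vd$ is non-constant across $A_1\cup A_2$; in particular there are histories $g,g'$ whose learned mixtures $e_\vd$ point in genuinely different directions of $\Rew$.

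The construction then exploits the gap between how the value and the domination condition see a relabelling. Writing $\mathrm{im}(\vd)=\{R_1,\dots,R_k\}$ and $S_i:=\sigma(R_i)$, affineness gives $e_{\sigma\circ\vd}(h_n)=\sum_i P(R_i\mid h_n,\vd)\,S_i$, so the value only sees the ``diagonal'' $\sum_i P(R_i\mid h_n,\vd)\,S_i(h_n)$, whereas the sacrifice condition compares each $S_i$ across whole histories. I would choose the targets $S_i$ so that (i) every $S_i$ ranks every history of one branch strictly above every history of the other branch, forcing that branch to dominate for all image reward functions at once, yet (ii) the diagonal is made strictly largest on the dominated branch, so that $\pol^{\sigma\circ\vd}$ is driven into it and thereby sacrifices. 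Feasibility of (i) and (ii) together is exactly where riggability enters: because the mixing vectors $p=(P(R_i\mid g,\vd))_i$ and $p'=(P(R_i\mid g',\vd))_i$ differ, the zero-sum vector $p-p'$ supplies a free direction along which I can inflate the diagonal on the dominated branch without disturbing the across-history ordering of the individual $S_i$; making the $S_i$ very negative off the two branches renders the dominated branch the global diagonal optimum. Finally I would extend this prescription, defined on the affine span of $\mathrm{im}(\vd)$, to an injective (hence bijective) affine $\sigma$ on all of $\Rew$.

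\textbf{Main obstacle.} The crux is meeting the two competing demands on $\sigma$ at once: the sacrifice condition wants all image rewards small on the branch the agent will choose, while optimality wants the diagonal large there. Reconciling them requires that the learned mixtures $e_\vd$ differ between the branches, which is precisely the content of riggability, together with careful bookkeeping that the optimal policy's support and the dominating policy's support remain within their intended disjoint branches despite environment stochasticity, and the routine check that the prescribed $S_i$ respect the affine relations among the $R_i$ so that they extend to a genuine injective affine map.
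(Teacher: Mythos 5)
Your overall architecture is the paper's: your ($\Leftarrow$) direction (affine relabellings preserve unriggability, then apply \Cref{unrig:prop} to $\sigma\circ\vd$) is exactly the paper's argument, and your ($\Rightarrow$) plan --- localise riggability to a history $h_m$ with two actions $a\neq a'$, then pick $\sigma$ so that every relabelled reward ranks one branch strictly above the other while the optimised value ranks them oppositely --- is the same idea as the paper's construction of the subspace $W$. The genuine gap is in your localisation step, and it is load-bearing. The paper chooses $h_m$ to be a \emph{longest} history on which \Cref{unrigg:eq} is policy-dependent. That choice yields not merely two branches whose expected learned rewards differ, but the stronger fact that $\vd$ is unriggable on every strictly longer history, so the expected learned reward after $h_ma$ (resp.\ $h_ma'$) is \emph{independent of the continuation policy}; this is what makes the paper's $R_1$ and $R_2$ well defined and lets it compute the value of an \emph{arbitrary} policy that picks $a$ or $a'$ at $h_m$. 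With your weaker localisation, your requirements (i) and (ii) can be jointly unsatisfiable. Concretely: suppose that after $h_ma$ the agent can steer the learning to either $X$ or $Y$ (with $X\neq Y$), and after $h_ma'$ it can likewise steer to either $X$ or $Y$. Your $h_m$, with continuations reaching $X$ after $a$ and $Y$ after $a'$, witnesses riggability; but for any $\sigma$, requirement (i) (say the $a'$ branch dominates) forces $\sigma(X)$ and $\sigma(Y)$ to be strictly larger everywhere on the $a'$ branch, so the best diagonal value achievable on the $a'$ branch strictly exceeds the best one achievable on the $a$ branch, contradicting (ii). Hence $\pol^{\sigma\circ\vd}$ enters the dominating branch and no sacrifice is produced. (The theorem survives because a sacrifice can be manufactured at a \emph{longer} history --- which is exactly the maximality your sketch omits.)

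A second, smaller defect: your separating direction $p-p'$ is built from the mixtures at two individual histories $g,g'$, whereas what steers $\pol^{\sigma\circ\vd}$ is the branch-\emph{averaged} expected reward, i.e.\ the paper's $R_1-R_2$. A direction taken from single histories can be negatively aligned with the difference of the branch averages, in which case your inflation again drives the optimum into the dominating branch; and it must in any case be realised as an affine functional on $\Rew$, so that the affine relations among the $R_i$ are automatically respected --- this is not quite the afterthought you make it, since it is what forces the construction to act on expected reward functions rather than on mixture coordinates. Repairing both points --- take the longest offending history, and separate the two branch expectations $R_1\neq R_2$ by an affine functional $\phi$ with $\phi(R_1)>\phi(R_2)$ --- turns your sketch into precisely the paper's proof: the paper's $\sigma$ is ``$\phi$ on the $a$ branch, $\phi+1$ on the $a'$ branch, $0$ elsewhere'' with $\phi(R_1)=1$, $\phi(R_2)=-1$. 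One point in your favour: your inequality-based prescription, unlike the paper's one-dimensional $W$, is compatible (after a small generic perturbation) with the injectivity that the paper's own definition of a relabelling demands.
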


So any riggable $\vd$ is just one relabelling of its rewards away from sacrificing reward with certainty.
It can also exhibit other pathologies, like refusing to learn; see \Cref{experiments}.

\subsubsection{Example of reward sacrifice}
For an easy example of this behaviour, consider the learning process of \Cref{rig:examp}, with one modification: asking the mother ($a_1=M$) gets an extra penalty of $1$ for both reward functions, $R_B$ and $R_D$.

Then the agent will still ask the mother, getting a total reward of $R_B(MB)=10-1=9$, which is higher than the $R_D(FD)=1$ which it gets from asking the father -- even though $a_1=M$ is worse for both reward functions.

\section{Counterfactually uninfluenceable learning}\label{rig:uninf}

This section shows how \emph{any} reward-function learning process can be made uninfluenceable by using counterfactuals; see Section 4.6 of \citet{everitt2019reward} as well as \citet{me:counter,everitt2018towards}.
The `counterfactual oracle' of \citet{3oracles} can be considered a version of this, too.

For example, if we told a new agent ``maximise the value of the reward that was written in this envelope an hour ago'', then (if that instruction is properly obeyed), the agent has an uninfluenceable learning process.
If we instead said ``maximise the value of the reward that will be written in this envelope in an hour's time'', then that is highly riggable, since the agent can simply write its own reward.

But if instead we had said ``maximise the reward that \emph{would have been} written in this envelope in an hour's time, if we had not turned you on'', then this is uninfluenceable again. The agent can still go and write its own message, but this does not tell it anything about what would otherwise have been there.

We can formalise this thought experiment.
Given any reward-function learning process $\vd$, and any policy $\pol\in\Pol$, we can define a distribution $\envpri_\pol$ over reward functions, conditional on environments (and hence, via \Cref{cond:nu}, for a prior $\prior$ we have an uninfluenceable learning process $\vd_\pol$).

For any $\env\in\Env$, the policy $\pol$ gives a distribution over complete histories.
Each complete history gives a distribution over reward functions, via $\vd$.
Therefore, if we take expectations, any $\env$ gives, via $\pol$, a distribution over reward functions:
\begin{align*}
    P(R\mid \envpri_\pol, \env) = \sum_{h_m\in\Hist_n} P(h_n\mid \pol,\env)P(R\mid h_n, \vd).
\end{align*}

See \Cref{experiments}
for an example of this counterfactual construction, where the original learning process leads to pathological behaviour, but the counterfactual version does not.

Since unriggable is an algebraic condition, it is possible to make a process unriggable algebraically; see \Cref{rig:unrig}.

\section{Experiments}\label{experiments}

Here we will experimentally contrast a riggable agent, an influenceable (but unriggable) agent, and an uninfluenceable agent. This will illustrate pathological behaviours of influenceable/riggable agents: learning the wrong thing, choosing to `learn' when they already know, and just refusing to learn.

\subsection{Environmental setup}

The environment is a $4\times 3$ gridworld, formalising the example of \Cref{bank:doc:formal}: an agent asking a parent what the correct course of action is.
The agent starts in the left middle square and can go north (up), south, east, or west. The father is one square to the west, the mother two squares to the east. Above the agent is a collection of money (for a banker), and, below, a stethoscope (for a doctor); see \Cref{mother_father_banker_doctor}. Episodes end if the agent enters either of these two square or walks into a wall.

\begin{figure}[h!tb]
	\centering
    \includegraphics[width=\picwidth]{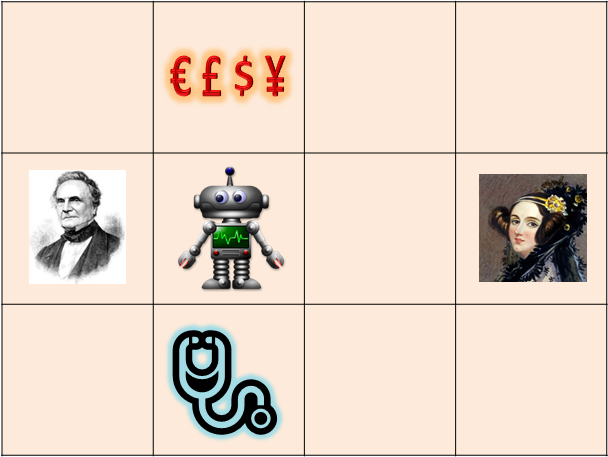}
\caption{The agent/robot can go north to collect money~(reward $10$ from reward function $R_B$) or south to get the stethoscope (reward $1$ from reward function $R_D$). Which is the correct reward function is not known to the agent, who has to ask either the father~(west) or the mother~(two squares east); it asks by entering the corresponding square. Each turn it takes also gives it a reward of $-0.1$ for both reward functions.}
\label{mother_father_banker_doctor}
\end{figure}

\subsection{Reward functions}

There are two reward functions, $R_B$, the banker reward function~(easier to maximise), and $R_D$, the doctor reward function~(harder to maximise). If the agent enters the square with the money, $R_B$ gives a reward of $10$; if the agent enters the square with the stethoscope, $R_D$ gives a reward of $1$.
Otherwise, each reward simply gives a reward of $-0.1$ each turn.

\subsection{Learning processes}

Upon first entering a square with either parent, the agent will be informed of what their ``correct'' career reward function is.

The update is the same as that in of \autoref{vd:example} in \Cref{bank:doc:formal}, with $MB$ meaning ``ask mother first, who says `banker'''; $MD$, $FB$, and $FD$ are defined analogously.

In the current setup, the agent also has the possibility of going straight for the money or stethoscope. In that case, the most obvious default is to be uncertain between the two options; so, if the agent has not asked either parent in $h$:
\begin{align*}
    P(R_B \mid h, \vd)=P(R_D \mid h, \vd) &=1/2.
\end{align*}

\subsection{The environments}

The set $\Env$ has four deterministic environments: $\env_{BB}$, where both parents will answer ``banker'', $\env_{BD}$ where the mother will answer ``banker'' and the father ``doctor'', $\env_{DB}$, the opposite one, and $\env_{DD}$, where they both answer ``doctor''.

We will consider four priors: $\prior_{BD}$, which puts all the mass on $\env_{BD}$ (and makes $\vd$ riggable), $\prior_{DD}$, which puts all the mass on $\env_{DD}$ ($\vd$ riggable), $\prior_2$, which finds each of the four environments to be equally probable (see \Cref{unrig:examp} -- $\vd$ unriggable), and $\prior_1$, which finds $\env_{BB}$ and $\env_{DD}$ to be equally probable (see \Cref{uninf:examp} -- $\vd$ uninfluenceable).

\subsection{The agents' algorithm and performance}

We'll consider two agents for each prior: the one using the possibly riggable or influenceable $\vd$ (the ``standard'' agent), and the one using the uninfluenceable $\vd_\pol$ (the ``counterfactual'' agent, defined in \Cref{rig:uninf}). The default policy $\pol$ for $\vd_\pol$ is $\{\textrm{east},\textrm{east},\textrm{east}\}$, which involved going straight to the mother (and then terminating the episode); consequently the counterfactual learning process reduces to ``the correct reward function is the one stated by the mother''. The $\vd_\pol$ will be taken as the correct learning process, and the performance of each agent will be compared with this.

We can see the agent as operating in a partially observable Markov decision process~(POMDP), modelled as an MDP over belief states of the agent. These states are: the agent believes the correct reward is $R_B$ with certainty, $R_D$ with certainty, or is equally uncertain between the two.

So if $\mathcal{P}$ represents the twelve `physical' locations the agent can be in, the total state space of the agent consists of $\{R_B,R_D, 1/2R_B + 1/2 R_D\}\times \mathcal{P}$: $36$ possible states.

We train our agents' policies, and the value of these policies, via Q-learning~\citep{sutton1998reinforcement}.
The exploration rate is $\epsilon=0.1$, the agent is run for at most ten steps each episode. We use a learning rate of $1/n$. Here $n$ is not the number of episodes the agent has taken, but the number of times the agent has been in state $s$ and taken action $a$ so far---hence the number of times it has updated $Q(s,a)$. So each Q-value has a different $n$. For each setup, we run Q-learning $1000$ times. We graph the average Q-values for the resulting polices as well as one standard deviation around them.

\subsubsection{Riggable behaviour: pointless questions}

For $\prior_{BD}$, the standard agent transitions to knowing $R_B$ upon asking the mother, and $R_D$ upon asking the father.

Since the mother always says `banker', and since the default policy is to ask her, $\vd_\pol$ will always select $R_B$.

The two agents' performance is graphed in \Cref{q:learn:BD}.

\begin{figure}[h!tb]
	\centering
    \includegraphics[width=\picwidth]{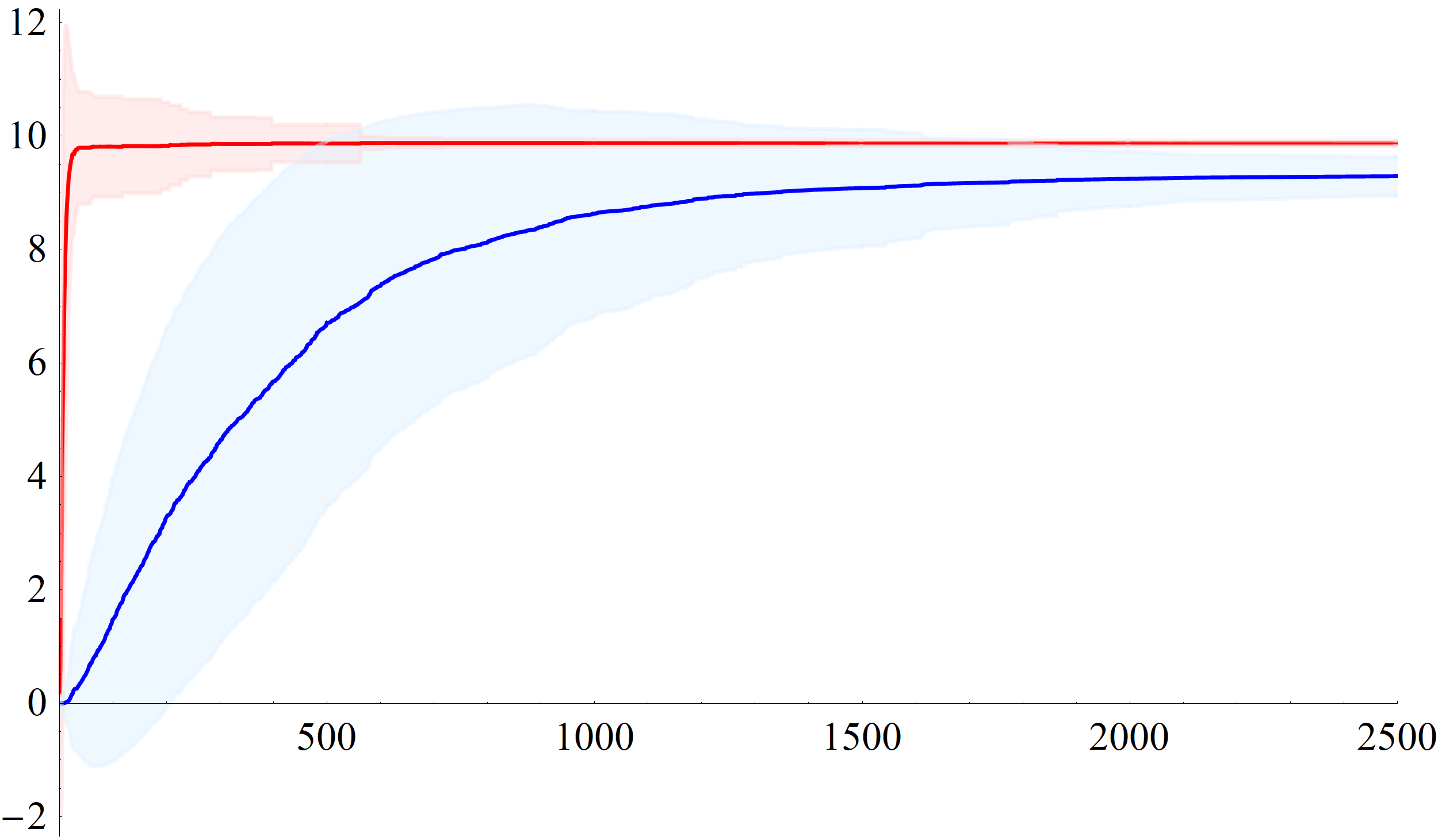}
\caption{Plot of estimated value of the optimal policy versus number of rounds of Q-learning. The counterfactual agent (red) outperforms the standard agent (blue), because it already knows that the correct reward function is $R_B$, while the standard agent has to nominally `go and check'. The shaded area represents one standard deviation over $1000$ runs.}
\label{q:learn:BD}
\end{figure}

The counterfactual agent swiftly learns the optimal policy: go straight north from the starting position, getting the money and a reward of $10-0.1=9.9$.

The standard agent learns to end up in the same location, but in a clunkier fashion: for its optimal policy, it has to ``go and check'' that the mother really prefers it become a banker.
This is because it can only get $P(R_B \mid h)=1$ if $MB$ is included in $h$; knowing that ``it would get that answer if it asked'' is not enough to get around asking. This policy gives it a maximal reward of $10-0.5=9.5$, since it takes $5$ turns to go to the mother and return to the money square.

This is an example of \Cref{unrig:theo}, with the riggable agent losing value with certainty for both $R_B$ \emph{and} $R_D$: just going north is strictly better for both reward functions.

\subsection{Riggable behaviour: ask no questions}

For $\prior_{DD}$, the optimal policies are simple: the counterfactual agent knows it must get the stethoscope (since the mother will say `doctor'), so it does that, for a reward of $1-0.1=0.9$.

The standard agent, on the other hand, has a problem: as long as it does not ask either parent, its reward function will remain $1/2 R_B + 1/2 R_D$. As soon as it asks a parent, though, the reward function will become $R_D$, which gives it little reward. Thus, even though it `knows' that its reward function would be $R_D$ if it asked, it avoids asking and goes straight north, giving it a nominal total reward of $1/2 \times 10-0.1=4.9$. However, for the correct reward of $R_D$, the standard agent's behaviour only gets it $-0.1$. See \Cref{q:learn:DD}.

\begin{figure}[h!tb]
	\centering
    \includegraphics[width=\picwidth]{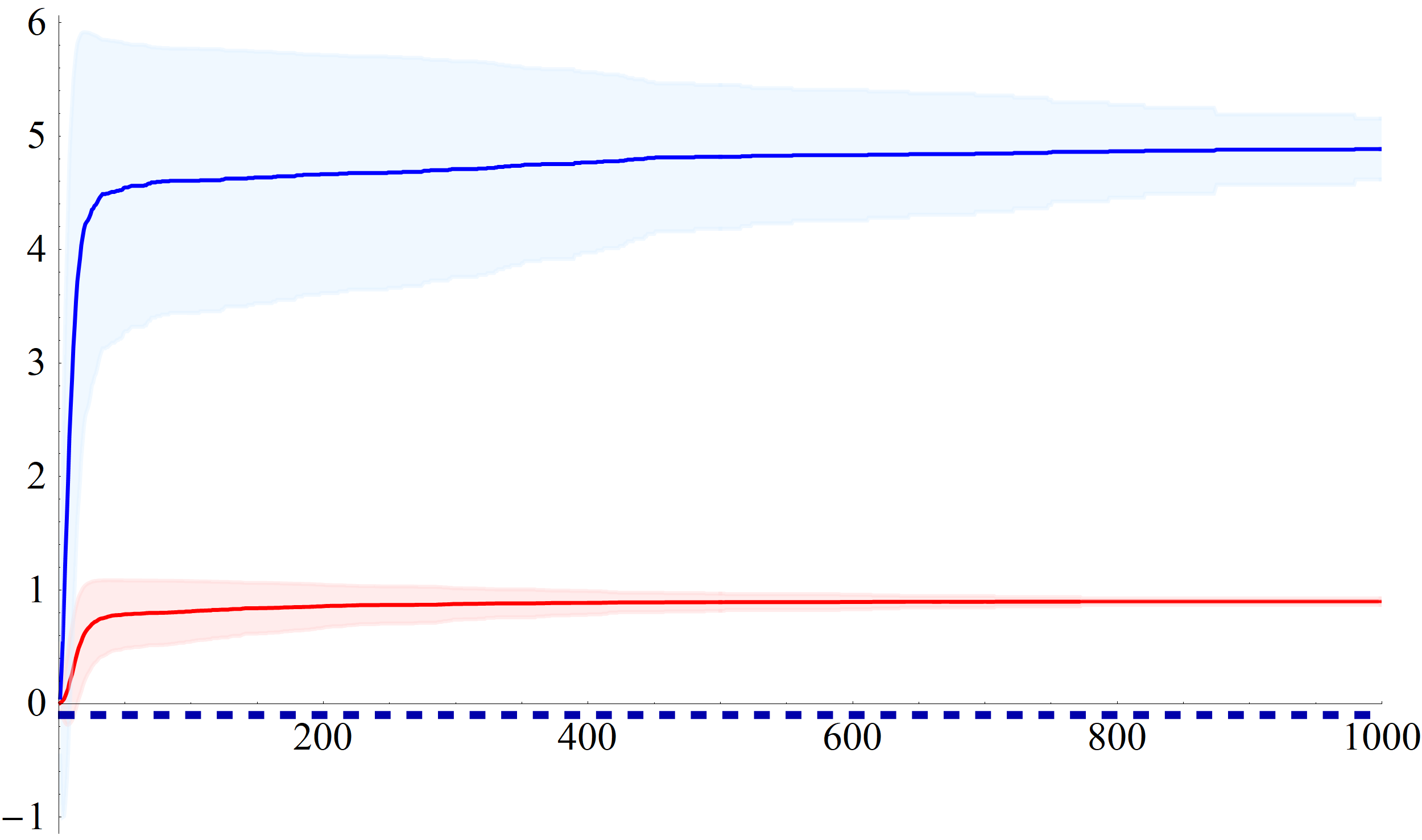}
\caption{Plot of estimated value of the optimal policy versus number of rounds of Q-learning. The counterfactual agent (red) learns to go south immediately, will the standard agent (blue) learns to go north. The standard agent has a nominally higher value function, but its true reward is $-0.1$: it `knows' the correct reward function is $R_D$, but avoids `checking' by asking either parent. The shaded area represents one standard deviation over $1000$ runs.}
\label{q:learn:DD}
\end{figure}

%

\subsection{Unriggable, influenceable behaviour}

For the prior $\prior_2$, the standard agent moves to $R_B$ or $R_D$, with equal probability, the first time it asks either parent.
Its nominal optimal policy is to ask the father, then get money or stethoscope depending on the answer. So half the time it gets a reward of $10-0.3=9.7$, and the other half a reward of $1-0.3=0.7$, for a (nominal) expected reward of $5.2$.

The counterfactual agent also updates to $R_B$ or $R_D$, with equal probability, but when asking the mother only. Since it takes five turns rather than three to get to the mother and back, it will get a total expected reward of $5.0$.

Learning these optimal policies is slow -- see \Cref{q:learn:uncorrelated}.
However, the standard's agent's correct reward function is given by the mother, not the father. When they disagree, the reward is $-0.3$, for a true expected reward of $2.45$.

\begin{figure}[h!tb]
	\centering
    \includegraphics[width=\picwidth]{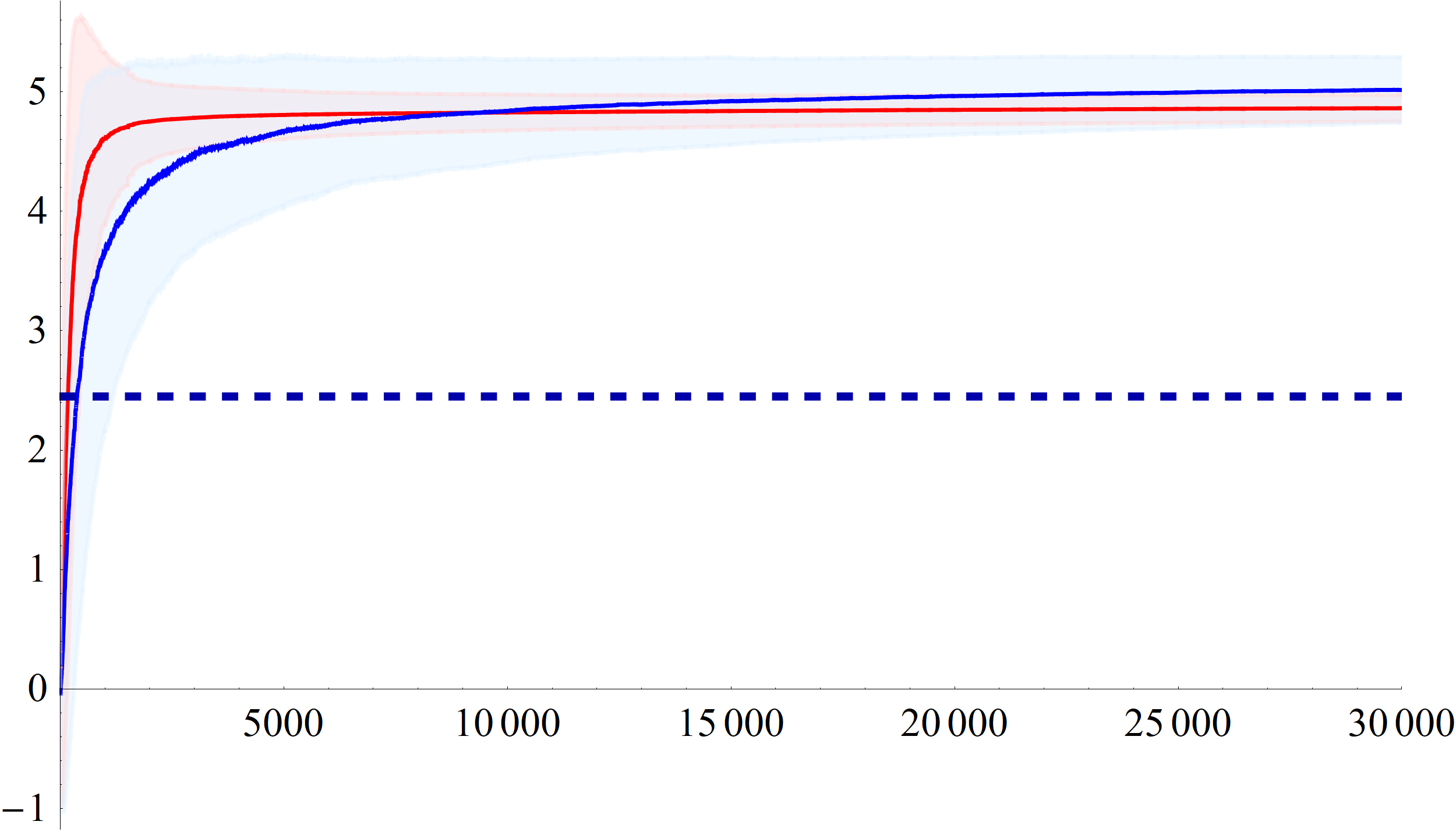}
\caption{Plot of estimated value of the optimal policy versus number of rounds of Q-learning. The counterfactual agent (red) learns to ask the mother, while the standard agent (blue) will ask the father, since he is closer. The standard agent has a nominally higher value function, but its true reward is $2.45$, since the mother's statement is the correct reward function. The shaded area represents one standard deviation over $1000$ runs.}
\label{q:learn:uncorrelated}
\end{figure}

\subsection{Uninfluenceable}

For $\prior_1$, asking either parent is equivalent, so $\vd$ and $\vd_\pol$ encode exactly the same learning process. So the two agents converge on the same optimal policy (ask the father, then act on that answer) with a value of $5.2$.
The plot for these two almost-identical convergences is not included here.


\section{Conclusion}

We have identified and formalised two theoretical properties of reward-function learning processes:
unriggability (an algebraic restriction on the learning process's updates) and the stronger condition of uninfluenceability  (learning defined entirely by background facts about the environment).
We've further demonstrated that unriggability is equivalent with uninfluenceability if the set of possible environments is rich enough.

These two properties are desirable: in a unriggable-but-influenceable situation, the agent can sometimes manipulate the learning process, by, for example, inverting it completely.
Riggable learning processes are even more manipulable in general; indeed the agent may to choose sacrifice reward for all possible reward functions with certainty, in order to `push' its learning in the right direction.

The first step in avoiding these pitfalls is to be aware of them.
A second step is to improve the agent's learning process, similarly to the `counterfactual' approach of this paper.

If a learning process allows humans to modify it at run time, it will almost certainly be riggable. Hence unriggable/uninfluenceable learning processes, though desirable, requires that much be sorted out rigorously in advance.
Fully defining many uninfluenceable reward functions could also require solving the symbol grounding problem \citep{vogt2007language} -- if a learning process is ``asking a parent'', then `asking' and `parent' needs to be defined.
But this is far beyond the scope of this paper.

Further research could apply learning methods to common benchmark problems, and extensions of those, to see how general these issues are, and whether there could exist some compromise learning processes that are ``almost unriggable'' but also easier to specify and modify at run time.




\section*{Acknowledgments.}

We wish to thank Michael Cohen, Shane Legg, Owain Evans, Jelena Luketina, Tom Everrit, Tor Lattimore, Jessica Taylor, Paul Christiano, Ryan Carey, Eliezer Yudkowsky, Anders Sandberg, and Nick Bostrom, among many others.
This work was supported by the Alexander Tamas programme on AI safety research, DeepMind, the Leverhulme Trust, and the Machine Intelligence Research Institute.

\bibliographystyle{named}

\bibliography{references}

\clearpage

\appendix

\section{Proofs}\label{proof:appendix}

This section will prove \Cref{unrig:prop} and \Cref{unrig:theo}.

\unrigprop*
\begin{proof}
Assume by contradiction that there is an $h_m$ and a $\pol$ such that $\pol^{\vd}$ sacrifices reward with certainty to $\pol$ on $h_m$.
We'll then show that $\pol$ is better than $\pol^\vd$ at maximising the value of $R^\vd$, which contradicts the definition of $\pol^\vd$.
Let $\Hist_n^\pol$ be the set of histories $h_n \sqsupset h_m$ with $P(h_n\mid h_m, \pol, \prior) > 0$; similarly define $\Hist_n^{\pol^\vd}$.

By the contradiction assumption, for all $R\in \mathrm{im}(\vd)$,
\begin{align*}
    \min_{h_n\in \Hist_n^\pol} R(h_n) > \max_{h'_n\in \Hist_n^{\pol^\vd}} R(h_n').
\end{align*}
Define $g: \mathrm{im}(\vd)\to\mathbb{R}$, by choosing $g(R)$ so that
\begin{align*}
    \min_{h_n\in \Hist_n^\pol} R(h_n) > g(R) > \max_{h'_n\in \Hist_n^{\pol^\vd}} R(h_n').
\end{align*}

For $R_1, R_2$ reward functions in $\mathrm{im}(\vd)$ and $q\in[0,1]$, consider $\min_{h_n\in \Hist_n^\pol} (qR_1 + (1-q)R_2)$.
This expression is greater than or equal to $\min_{h_n\in \Hist_n^\pol} qR_1(h_n) + \min_{h_n\in \Hist_n^\pol} (1-q)R_2(h_n)$, since minimising over two separate variables can't be larger than minimising while assuming the two variables are equal.
But that last expression is greater than $q g(R_1) + (1-q) g(R_2)$, so
\begin{align*}
\begin{array}{c}
    \min_{h_n\in \Hist_n^\pol} (qR_1 + (1-q)R_2) (h_n) \\
    > \\
    q g(R_1) + (1-q) g(R_2).
\end{array}
\end{align*}
Similarly,
\begin{align*}
\begin{array}{c}
    \max_{h_n'\in \Hist_n^{\pol^\vd}} (qR_1 + (1-q)R_2) (h_n')\\
    < \\
    q g(R_1) + (1-q) g(R_2).
    \end{array}
\end{align*}
The same results extend to any sum $\sum_{R\in \mathrm{im}(\vd)} \alpha_R R$ for $\alpha_R \geq 0$ and $\sum_{R\in \mathrm{im}(\vd)}\alpha_R = 1$.
So we can extend the definition of $g$ as an affine map to positive affine mixes of elements of $\mathrm{im}(\vd)$, while still maintaining the inequalities that defined it.

As a reminder, for $\vd$, let $e_\vd: \Hist_n\to\Rew$ be the expectation of $\vd$: ie
\begin{align*}
    e_\vd(h_n)=\sum_{R\in\Rew}P(R\mid h_n,\vd)R.
\end{align*}

Then $V(h_m, \vd,\pol, \prior)$ is:
\begin{align*}
    V(h_m, \vd,\pol, \prior) =& \sum_{h_n\in\Hist_n} P(h_n \mid h_m, \pol, \prior) R^\vd(h_n) \\
    =&\sum_{h_n\in\Hist_n} \Bigg(P(h_n \mid h_m,\pol,\prior) \\
    &\sum_{R\in\Rew} P(R\mid h_n, \vd) R(h_n) \Bigg)\\
    =&\sum_{h_n\in\Hist_n} P(h_n \mid h_m,\pol,\prior) e_\vd(h_n)(h_n) \\
    > & \sum_{h_n\in\Hist_n} P(h_n \mid h_m,\pol,\prior) g(e_\vd(h_n)) \\
    \geq& \ g\left(\sum_{h_n\in\Hist_n} P(h_n \mid h_m,\pol,\prior) e_\vd(h_n)\right) \\
    \geq& \ g\left(\sum_{h_n'\in\Hist_n} P(h_n' \mid h_m,\pol^\vd,\prior) e_\vd(h_n')\right),
\end{align*}
since $\vd$ is unriggable.
Continuing:
\begin{gather*}
    g\left(\sum_{h_n'\in\Hist_n} P(h_n' \mid h_m,\pol^\vd,\prior) e_\vd(h_n')\right) \\
    \geq \\
    \sum_{h_n'\in\Hist_n} P(h_n' \mid h_m,\pol^\vd,\prior) g(e_\vd(h_n')) \\
    > \\
    \sum_{h_n'\in\Hist_n} P(h_n' \mid h_m,\pol^\vd,\prior)e_\vd(h_n')(h_n')\\
    = \\
    V(h_m, \vd,\pol^\vd, \prior).
\end{gather*}
Since some of those inequalities were strict, $V(h_m, \vd,\pol, \prior) > V(h_m, \vd,\pol^\vd, \prior)$, giving the desired contradiction.

\end{proof}

\unrigtheo*

\begin{proof}

Being riggable means that there must exist a history $h_m$ and policies $\pol$, $\pol'$ such that
\begin{align}\label{ineq:eq}
\begin{gathered}
    \sum_{h_n\in\Hist_n} P(h_n \mid h_m,\pol,\prior) e_\vd(h_n)\\
    \neq\\
    \sum_{h_n\in\Hist_n} P(h_n \mid h_m,\pol,\prior) e_\vd(h_n).
\end{gathered}
\end{align}
Let $h_m$ be the longest possible history such that this is true (it's clear that $m<n$, since \Cref{unrigg:eq} is trivially true for $m=n$).
Since $h_m$ is possible, there exists a $\pol''$ such that $P(h_m \mid \pol'',\prior)>0$; indeed, let $\pol''$ be a policy that maximises that probability.
In the above inequality, the policies only affect histories $h \sqsupseteq h_m$, so we can assume that $\pol = \pol'= \pol''$ for $h_k$ and $k<m$; thus $h_m$ is possible for both $\pol$ and $\pol'$, and has equal and maximal probability for both.

We may also assume that $\pol$ and $\pol'$ are deterministic: for if \Cref{ineq:eq} were an equality for all deterministic policies, then by linearity it would be an equality for all policies.

Assume $\pol$ chooses action $a_{m+1}=a$ on $h_m$, and $\pol'$ chooses action $a_{m+1}=a'$ on $h_m$.
If $a$ were equal to $a'$, then
\begin{align*}
\begin{array}{c}
    \sum_{h_n\in\Hist_n} P(h_n \mid h_m,\pol,\prior)e_\vd(h_n)\\
    =\\
    \sum_{o\in\Obs} P(o\mid h_m a,\prior) \sum_{h_n\in\Hist_n} P(h_n \mid h_mao,\pol,\prior)e_\vd(h_n),
\end{array}
\end{align*}
and
\begin{align*}
\begin{array}{c}
    \sum_{h_n\in\Hist_n} P(h_n \mid h_m, \pol',\prior)e_\vd(h_n)\\
    =\\
    \sum_{o\in\Obs} P(o\mid h_m a,\prior) \sum_{h_n\in\Hist_n} P(h_n \mid h_mao,\pol',\prior)e_\vd(h_n).
\end{array}
\end{align*}
In that case, our assumption that $h_m$ is the longest possible history on which unriggability fails, would imply that those two expressions are equal.
This contradicts our initial assumption, so $a \neq a'$.

We can now start defining $\sigma$.
First, define the affine subspace $W\subset\mathbb{R}$ as the space of reward functions $R$ with:
\begin{itemize}
    \item $R(h_n)=0$ unless $h_n \sqsupset h_ma$ or $h_n \sqsupset h_ma'$,
    \item $R(h_n)=R(h_n')-1$ for all $h_n \sqsupset h_ma$ and $h_n' \sqsupset h_ma'$.
\end{itemize}
The $W$ is an affine subspace as all the properties that define it are closed under affine combinations (note that the last condition implies that $R$ must be the same on any two histories both $\sqsupset h_ma$, and similarly also the same on any two histories both $\sqsupset h_ma'$).

We will require that $\sigma$ maps $\mathrm{im}(\vd)$ into $W$.

Let $R_1= \sum_{h_n\in\Hist_n} P(h_n \mid h_m,\pol,\prior)e_\vd(h_n)$ and let $R_2= \sum_{h_n\in\Hist_n} P(h_n \mid h_m,\pol',\prior)e_\vd(h_n)$.
These $R_1$ and $R_2$ are unequal by assumption, and are affine combinations of elements of $\mathrm{im}(\vd)$, so they also get mapped into $W$ by $\sigma$.

We'll further specify that $\sigma(R_1)(h_n)=1$ and $\sigma(R_2)(h_n)=-1$ for all $h_n \sqsupset h_ma$ (the properties of $W$ imply that the $\sigma(R_i)$ are now fully specified).

There are now two claims:
\begin{enumerate}
    \item\label{first:property} $\pol = \pol^{\sigma\circ\vd}$,
    \item\label{second:property} On $h_m$, $\pol$ sacrifices all rewards in $\mathrm{im}(\vd)$ with certainty to $\pol'$.
\end{enumerate}
To check optimality, we only need consider the deterministic policies.
Let $\pol''$ be a deterministic policy that would not choose $a$ or $a'$ on $h_m$.
Then for any history with $P(h_n \mid \pol'',p )\neq 0$, we must have $\sigma(R)(h_n)=0$, for all $R\in \mathrm{im}(\vd)$, by the properties of $W$.

Now let $\pol''$ be a policy that would choose $a_{m+1}=a$ at $h_m$.
The value of $V(h_0,\sigma\circ\vd, \pol'', \prior)$ of $\pol''$ for $R^{\sigma\circ\vd}$ is:
\begin{gather*}
    P(h_m\mid \pol'',\prior)\left[\sum_{h_n\in\Hist_n}P(h_n\mid \pol'',\prior)e_{\sigma\circ\vd}(h_n)\right] \\
    =\\
    P(h_m\mid \pol'', \prior)\Bigg[ \sum_{o\in\Obs} P(o\mid h_ma, \prior) \\
    \times \sum_{h_n\in\Hist_n} P(h_n \mid h_mao,\pol'',\prior) e_{\sigma\circ\vd}(h_n)\Bigg].
\end{gather*}
By the assumption that $h_m$ was the longest history on which $\vd$ was riggable, the second term can be rewritten with $\pol$ instead of $\pol''$, and the whole expression becomes
\begin{gather*}
    P(h_m\mid \pol'',\prior)\Bigg[ \sum_{o\in\Obs} P(o\mid h_ma,\prior) \\
    \times \sum_{h_n\in\Hist_n} P(h_n \mid h_mao,\pol,\prior) e_{\sigma\circ\vd}(h_n)\Bigg] \\
    =\\
    P(h_m\mid \pol'',\prior)\left[\sum_{h_n\in\Hist_n}P(h_n\mid h_m \pol ,\prior) e_{\sigma\circ\vd}(h_n)\right] \\
    =\\
    P(h_m\mid \pol'',\prior)\left[\sum_{h_n\in\Hist_n}P(h_n\mid h_m \pol, \prior)e_{\sigma\circ\vd}(h'_n)\right],
\end{gather*}
for any $h'_n \sqsupset h_ma$, by the properties of $W$.
Then, finally, this becomes
\begin{gather*}
    P(h_m \mid \pol'', \prior)\left[\sum_{h_n\in\Hist_n}P(h_n\mid h_m \pol, \prior)e_{\sigma\circ\vd}(h'_n)\right] \\
    =\\
    P(h_m \mid \pol'', \prior)\left[\sum_{h_n\in\Hist_n}P(h_n\mid h_m \pol, \prior)e_{\sigma\circ\vd}\right](h_n') \\
    =\\
    P(h_m \mid \pol'', \prior) \sigma(R_1)(h_n')\\
    =\\
    P(h_m \mid \pol'', \prior),
\end{gather*}
since $\sigma$ is affine, and by the properties of $\sigma(R_1)$.

If $\pol''$ chooses $a_{m+1}=a'$ on $h_m$, similar reasoning shows that for any $h_n' \sqsupset h_ma'$,
\begin{align*}
    V(h_m, \sigma\circ\vd,\pol'',\prior) =& P(h_m\mid \pol'',\prior) \sigma(R_2)(h_n') = 0,
\end{align*}
by the properties of $\sigma(R_2)$ and $W$.

Therefore it's clear that $\pol$, which maximises $P(h_m\mid \pol,\prior)$ and chooses $a$ on $h_m$, maximises $R^{\sigma\circ\vd}$ and is hence $\pol^{\sigma\circ\vd}$.

Now we choose the history $h_m$, and show that $\pol'$ dominates $\pol$ on it with certainty.
This is easy to see, since if $h_n$ is possible given $\pol=\pol^{\sigma\circ\vd}$ and $h_m$, then $h_n \sqsupset h_ma$.
Similarly, if $h_n'$ is possible given $\pol'$ and $h_m$, then $h_n' \sqsupset h_ma'$.

Now if $R \in \sigma(\mathrm{im}(\vd))$, then $R$ is in $W$, and hence $R(h_n) = R(h_n')-1$, so $R(h_n) < R(h_n')$, showing that $\pol^{\sigma\circ\vd}$ sacrifices reward to $\pol'$ with certainty.

To get necessary and sufficient, note that if $\vd$ is unriggable, then so is $\sigma\circ\vd$, and by \Cref{unrig:prop}, unriggable $\vd$ \emph{never} sacrifice reward with certainty.

\end{proof}

\section{Making the riggable unriggable}\label{rig:unrig}

This section will show how an unriggable $\vd_n$ can be constructed from a riggable $\vd$, with algebraic manipulations.
Given a default policy $\pol$, the expectation of any reward-function learning process $\vd'$ can be defined by \Cref{extend:erho}:
\begin{align*}
    e_{\vd'}(h_m, \pol, \prior) = \sum_{h_n\in\Hist_n} P(h_n \mid h_m, \pol, \prior) e_{\vd'}(h_n).
\end{align*}

Then the construction proceeds inductively, by making the expectation get preserved in expectation on every history.
To get the process started, define $e_{\vd_0}=e_{\vd}(h_0,\pol,\prior)$.

The construction will proceed by constructing $\vd_m$ for growing $m$, with $m$ indexing the fact that $\vd_m$ is well defined (up to translation) for histories of length $\leq m$.

Then to get the induction step, define the map $T$ from histories followed by an action:
\begin{align}\label{eq:T}
\begin{aligned}
    T(h_m a) =& e_{\vd_m}(h_m,\pol,\prior)\\
    & - \sum_{o\in\Obs} P(o\mid h_m a, \prior) e_{\vd}(h_m a o, \pol,\prior).
\end{aligned}
\end{align}
Notice that the image of $T$ is itself a reward function.
For this process, we are going to `add' reward functions $R'$ to distributions.

This will be done by translation: for all $h_n\in\Hist_n$, there are unique $h_m$ and $a$ with $h_m a \sqsubset h_n$, and define $\vd_{m+1}$ as being:
\begin{align*}
    P(R + T(h_m a) \mid h_n,\vd_{m+1},\prior)= P(R \mid h_n,\vd,\prior).
\end{align*}

Then the new reward-function learning process is defined to be $\vd_n$.

By construction, for any $m < n$, $e_{\vd_n}(h_m,\pol,\prior)=\sum_{o\in\Obs}P(o\mid h_m a, \prior)e_{\vd_n}(h_ma,\pol,\prior)$ for all actions $a$.
So the expectation of $\vd_n$ (conditional on $\pol$) is equal to the expected expectation over the next history, for whatever $a$ is chosen.
Since this is true for all $h_m$ and all $a$, this expression is in fact independent of $\pol$, and hence $\vd_n$ is unriggable.

\subsection{Image of the learning process}

Define $\mathrm{im}(\vd)$ the image of $\vd$ to be the $R\in\Rew$ such that there exists a history $h_n\in\Hist_n$ with $P(R\mid h_n, \vd)\neq 0$.
Then the $\vd_n$ constructed above will have $\mathrm{im}(\vd_n)$ contained in the affine hull of $\mathrm{im}(\vd)$: this is because the image of $e_\vd$ is obviously in the affine hull of $\mathrm{im}(\vd)$ (indeed, its in the convex hull), $T(h_m a)$ is a linear mix of elements of $\mathrm{im}(\vd)$ where the coefficients sum to $0$, and hence $R+T(h_m a)$ is also an affine combination of elements of $\mathrm{im}(\vd)$.

However, $\mathrm{im}(\vd_n)$ need not be in the \emph{convex hull} of $\mathrm{im}(\vd)$.

For a simple counterexample, set $n=1$, $\Act=\{a,a'\}$, $\Obs=\{o,o'\}$, and have $\prior$ so that the probability of $o_1=o$ and $o_1=o'$ are equal to $1/2$, independently of $a_1$.

If there are two rewards $R$ and $R'$, we can define $\vd$ to be
\begin{align*}
    P(R\mid ao,\vd)=P(R \mid ao',\vd)=&1,\\
    P(R\mid a'o,\vd)=P(R' \mid a'o',\vd)=&1.\\
\end{align*}
So $a$ forces $R$ to be the reward, while $a'$ allows the observation to pick between $R$ and $R'$.
Thus $\mathrm{im}(\vd)=\{R,R'\}$.
Then if $\pol$ consists of taking action $a_1=a$, the constructed $\vd_1$ has:
\begin{align*}
    P(R\mid ao,\vd_1)=P(R \mid ao',\vd_1)=&1,\\
    P(\frac{3}{2}R-\frac{1}{2}R' \mid a'o,\vd_1)= P(\frac{1}{2}R+\frac{1}{2}R' \mid a'o',\vd_1)=&1.\\
\end{align*}
And $\frac{3}{2}R-\frac{1}{2}R'$ is outside the convex hull of $\{R,R'\}$.

\section{Unriggable to uninfluenceable}\label{unrig:to:uninf}

This section aims to prove:
\unrigtouninftheorem*

The proof will proceed by construction of these $\vd'$, $\Env'$, and $\prior'$.
The construction is non-unique; indeed, even if $\vd$ is initially uninfluenceable, there is no guarantee that $\vd'$ will be equal to it.

Convex combinations of probability distributions are also probability distributions; but this construction will involve adding (and subtracting) such distributions.
Because this can be defined many ways, we will generally be working with deterministic distributions, where addition and subtraction can be defined by adding or subtracting the unique element they assign non-zero probability to.

Let $\Env'$ be the set of all deterministic environments, so $\env\in\Env'$ means that $P(o\mid h_m a,\env)$ is $1$ or $0$ for all $o\in\Obs$, $a\in\Act$, and $h_m\in\Hist$.
The proof will proceed by constructing a deterministic $\envpri'$ on $\Env'$, with $\vd'$ then given by $\prior'$ and \Cref{cond:nu}.

Since $\envpri'$ is deterministic, we will identify it with a map $\envpri':\Env'\to\Rew$.

\subsection{Inductive construction}

To produce the required $\envpri'$ and $\prior'$, we will use an inductive construction.
Let $\Env_m$ be the set of deterministic environments for the first $m$ observations of the agent.

The induction step will assume that we have a suitably defined $\prior'$ and $\envpri'$ on $\Env_{m-1}$, and will show that we can extend these definitions to $\Env_m$, and hence, by induction, to $\Env_n=\Env'$.

To start the induction, note that $\Env_0'$ has a single, trivial, environment, $\env_0$.
Then define $P(\env_0\mid \prior')=1$, and recall that since $\vd$ is unriggable, expressions like $P(R\mid h_m,\vd,\prior)$ make sense for all histories $h_m$, without needing to condition on policies.
Thus define:
\begin{align*}
    \envpri'(\env_0) = \sum_{R\in\Rew} P(R \mid h_0)R.
\end{align*}

Now for the inductive step.

\subsubsection{Definitions and preliminaries}

Elements of $\Act^m$ (ordered sets of $m$ actions) will be designated by terms like $a^m$, $b^m$, and $c^m$.
Write $a^m_l$ for the $l$-th action in $a^m_l$, and $a^{m,l}$ for the $l$ first actions of $a^m$.
Write $a^l \sqsubset b^m$ means that $a^l$ consists of the first $l$ actions of $b^m$ (ie $b^{m,l} = a^l$).
Elements of $\Obs^m$ will be designated by terms like $o^m$ and $q^m$, and treated similarly to the actions above.

Let $a(h_m)$ be the actions of $h_m$, and $o(h_m)$ be the observations of $h_m$.
In this section, we will often write histories in a different but equivalent fashion, as $o(h_m)a(h_m)$, all the actions in order followed by all the observations in order.
By abuse of notation, we will write $h_m=o(h_m)a(h_m)$, ignoring whether the actions and observations are interleaved or grouped.

Since it is deterministic, any element $\env_m\in\Env_m$ can be seen as a map from $\Act^m$ to $\Obs^m$: the history $h_m=\env_m(a^m)a^m$ being the only history such that $P(h_m\mid a^m, \env_m)=1$ (recall that $a^m$ can be seen as the deterministic policy of taking action $a^m_i$ on turn $i$, whatever the history is at that point).

Indeed, by this definition, $\env_m$ can be seen as map from $\Act^l\to\Obs^l$, for $l\leq m$, with $a^l\env_m(a^l)$ the shorter history generated by the first $l$ actions being $a^l$.

Let $f$ be any map that takes $\Act^l$ to $\Obs^l$, for all $1\leq l\leq m$.
Then it easy to see that $f$ corresponds to an element of $\Env_m$ iff $f(a^l) \sqsubseteq f(b^k)$ whenever $a^l \sqsubseteq b^k$ (the observations generated by $f$ depend only on past actions, not future actions).

To simplify notation, for $l<m$, we'll write $\env^l(a^m)$ rather than $\env^l(a^{m,l})$.

\subsubsection{The inductive step: priors}

Given $\env^m \in\Env^{m+1}$, we can map it into $\Env^{m}$ by just restricting it to the $\Act^l$, for $l\leq m$.
Conversely, given $\env^{m}\in\Env^{m}$ and $g:\Act^{m+1} \to \Obs$, we can construct an element $\env^{m}_g$ of $\Env^{m+1}$ by defining, for $l\leq m$:
\begin{align*}
    \env^{m}_g(a^l) =& \env^{m}(a^l) \\
    \env^{m}_g(a^{m+1}) =& \env^{m}(a^{m+1})g(a^{m+1}).
\end{align*}

We will define the prior $\prior'$ by taking the probability $P(\env^{m-1}\mid \prior')$ of $\env^{m-1}$, and splitting that among all the environments of the type $\env^{m-1}_g$.
Specifically:
\begin{align}\label{prob:eq}
\begin{split}
    P(\env^{m}_g\mid \prior') =& P(\env^{m}\mid \prior')\times \\
    & \prod_{a^{m+1}\in\Act^{m+1}}P( g(a^m) \mid \env^{m}(a^{m+1})a^{m+1},\prior).
\end{split}
\end{align}

Notice the use of $\prior$ in this definition to compute the probability of the next observation.

Then the first thing to note is:
\begin{lemma}\label{prior:lemma}
The prior $\prior$ and $\prior'$ generate the same environment transition probabilities: for all $h_m$, $a$, and $o$,
    \begin{align*}
        P(o\mid h_ma,\prior)=P(o\mid h_ma,\prior').
    \end{align*}
\end{lemma}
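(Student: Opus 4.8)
The plan is to read the equality of transition probabilities off the splitting formula \Cref{prob:eq} directly, after first recording a short \emph{consistency} observation about that formula. First I would check that \Cref{prob:eq} distributes the mass of each $\env^m\in\Env_m$ among its one-step extensions without loss: since a map $g\colon\Act^{m+1}\to\Obs$ is determined by its independent values at each action sequence,
\[
\sum_g P(\env^m_g\mid\prior') = P(\env^m\mid\prior')\prod_{a^{m+1}\in\Act^{m+1}}\Big(\sum_{o\in\Obs}P(o\mid\env^m(a^{m+1})a^{m+1},\prior)\Big) = P(\env^m\mid\prior'),
\]
because each inner sum is $1$. Hence the marginal of $\prior'$ obtained by restricting $\Env_{m+1}$ (or all of $\Env'=\Env_n$) back to $\Env_m$ coincides with the stage-$m$ prior, so that for any history $h_m$ the quantity $P(h_m\mid\prior')=\sum_{\env^m\text{ consistent with }h_m}P(\env^m\mid\prior')$ is unambiguous, where ``consistent'' means $\env^m(a(h_m))=o(h_m)$.

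Next I would compute $P(h_m a o\mid\prior')$ by summing the prior mass over the deterministic environments of $\Env_{m+1}$ consistent with the extended history $h_m a o$. Each such environment is $\env^m_g$ with $\env^m$ consistent with $h_m$ and $g(a(h_m)a)=o$. Substituting \Cref{prob:eq} and grouping the product over $a^{m+1}\in\Act^{m+1}$, the single factor at $a^{m+1}=a(h_m)a$ is $P(o\mid\env^m(a(h_m)a)\,a(h_m)a,\prior)=P(o\mid h_m a,\prior)$ (using $\env^m(a(h_m))=o(h_m)$ and $h_m=o(h_m)a(h_m)$), while summing $g$ over all other action sequences contributes $\prod_{a^{m+1}\neq a(h_m)a}\sum_{o'}P(o'\mid\cdots,\prior)=1$. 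This gives $P(h_m a o\mid\prior')=P(o\mid h_m a,\prior)\cdot P(h_m\mid\prior')$.

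Finally I would divide. Under the implicit policy $a(h_m)$ of the history (cf.\ \Cref{cond:pol}) the action carries probability $1$, so $P(h_m a\mid\prior')=\sum_{o}P(h_m a o\mid\prior')=P(h_m\mid\prior')$, whence
\[
P(o\mid h_m a,\prior')=\frac{P(h_m a o\mid\prior')}{P(h_m a\mid\prior')}=\frac{P(o\mid h_m a,\prior)\,P(h_m\mid\prior')}{P(h_m\mid\prior')}=P(o\mid h_m a,\prior),
\]
which is the claim. One could instead phrase this as an induction on $m$, but the one-level computation already suffices, since a transition probability involves only a single step of the construction.

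The hard part is purely bookkeeping rather than conceptual: keeping straight the identification of a deterministic $\env^m$ with a map $\Act^l\to\Obs^l$, the convention that $\env^m(a^{m+1})$ denotes the observations generated by the \emph{first} $m$ actions, and the extraction of the one relevant factor from the product over all of $\Act^{m+1}$ in \Cref{prob:eq}. Care is likewise needed to justify that the stage-$m$ marginal and the marginal inherited from $\Env'$ agree — precisely the consistency observation above — so that $P(h_m\mid\prior')$ may be used interchangeably at either level.
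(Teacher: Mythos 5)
Your proof is correct and follows essentially the same route as the paper's: both hinge on marginalising over $g$ in \Cref{prob:eq}, where each unconstrained factor sums to $1$ and the single constrained factor at the action sequence $a(h_m)a$ yields exactly $P(o\mid h_m a,\prior)$. The only difference is packaging — you compute joint history probabilities under $\prior'$ and divide at the end, whereas the paper runs the same cancellation through the Bayes posterior over environments, and your explicit marginal-consistency observation is precisely the paper's denominator manipulation in disguise.
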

\begin{proof}
Induction is not needed for this proof.
Write $h_m$ as $o^m a^m$.
By definition:
\begin{gather*}
    P(o \mid o^m a^ma, \prior')\\
    =\\
    \sum_{\env^{m+1}\in\Env^{m+1}} P(\env^{m+1} \mid o^m a^m, \prior') P(o \mid o^m a^ma, \env^{m+1}).
\end{gather*}

Since the $\env^m$ are deterministic, $P(\env^{m+1} \mid o^m a^m, \prior')$ is $0$ if $\env^{m+1}(a^m)\neq o^m$, and otherwise:
\begin{gather*}
    P(\env^{m+1} \mid o^m a^m, \prior') \\
    =\\
    \frac{P(\env^{m+1} \mid \prior')}{\sum_{\envt^{m+1}\in S^{m+1}} P(\envt^{m+1} \mid \prior')}.
\end{gather*}
Where $S^{m+1}\subset \Env^{m+1}$ is defined to be those $\envt^{m+1}$ with $\envt^{m+1}(a^m)=o^m$.
Notice that this is really a condition on the restriction to $\Env^m$, so define $S^{m}$ as the restriction of $S^{m+1}$ to $\Env^m$.

Let $g$ be any function from $\Act^{m+1}$ to $\Obs$ that sends $a^m a$ to $o$.
Then $P(o \mid o^m a^ma, \env^{m+1})$ is $1$ if $\env^{m+1}=\env^{m}_g$ for some such $g$ and $\env^m$ the restriction of $\env^{m+1}$ to $\Env^m$.
Otherwise, that expression is $0$.
So, putting these together,
\begin{gather*}
    P(o \mid o^m a^ma, \prior')\\
    =\\
    \frac{\sum_{g:g(a^m a)=o}\sum_{\env^m \in S^m} P(\env^m_g \mid \prior')}{\sum_{g':\Act^m \to \Obs}\sum_{\envt^m \in S^m} P(\envt^m_{g'} \mid \prior')}\\
    =\\
    \frac{P(o \mid o^m a^m a, \prior)}{1} \times \frac{\sum_{\env^m \in S^m} P(\env^m \mid \prior')}{\sum_{\envt^m \in S^m} P(\envt^m \mid \prior')},
\end{gather*}
since summing over $g$ is marginalising out all other probability terms in the product in \Cref{prob:eq}.

Thus $P(o\mid o^m a^m, \prior)=P(o\mid o^m a^m, \prior')$.
\end{proof}

\subsubsection{Inductive step: map from environments to rewards}

Since $\vd$ is unriggable, for any $h_m$, $R\in\Rew$, and $a\in\Act$,
\begin{align}\label{unrig:p'}
\begin{split}
    P(R \mid h_{m},\vd) &= \sum_{o\in\Obs}P(R \mid h_m,\vd) P(o \mid h_{m}a, \prior)\\
    &= \sum_{o\in\Obs}P(R \mid h_m,\vd) P(o \mid h_{m}a, \prior'),
\end{split}
\end{align}
substituting $\prior'$ for $\prior$ by \Cref{prior:lemma}.
Define the expectation operator $e_\vd : \Hist\to\Rew$ as
\begin{align*}
e_\vd(h_m)=\sum_{R\in\Rew} P(R\mid h_m,\vd)R.
\end{align*}
Taking the expectation of the terms in \Cref{unrig:p'} gives:
\begin{align}\label{expect:zero}
    e_\vd(h_{m}) = \sum_{o\in\Obs} e_\vd(h_{m}ao) P(o\mid h_{m}a,\prior').
\end{align}

Now define the $\tau$ operator, which maps histories of length $\geq 1$ to reward functions, as
\begin{align}\label{tau:def}
    \tau(h_{m}ao)=e_\vd(h_{m}ao)-e_\vd(h_{m}).
\end{align}
Taking \Cref{tau:def}, multiplying by $P(o\mid h_{m}a,\prior')$ and summing over $o$ gives
\begin{gather}\label{unrig:tau}
\begin{gathered}
    \sum_{o\in\Obs}P(o \mid h_{m}a, \prior') \tau(h_{m}ao) \\
    =\\
    \left(\sum_{o\in\Obs} e_\vd(h_{m}ao) P(o\mid h_{m}a,\prior')\right)-e_\vd(h_{m})\\
    =\\
    0,
\end{gathered}
\end{gather}
by \Cref{expect:zero}.

We are now ready to define $\envpri':\Env^{m+1}\to\Rew$, inductively.
On $\env^{m}_g$, it is given by
\begin{align*}
    \envpri'(\env^{m}_g) = \left(\envpri'(\env^{m})+ \sum_{a^{m+1}} \tau(\env^{m}_g(a^{m+1})a^{m+1})\right).
\end{align*}

The $\vd'$ is defined by \Cref{cond:nu}; this gives the expectation of $\vd'$ as
\begin{align*}
    e_{\vd'}(h_m) = \sum_{\env^m} P(\env^m \mid h_m, \prior') \envpri'(\env^m).
\end{align*}

A first thing to note is that for any $\env^m$:
\begin{align*}
    \sum_{g:\Act^{m+1}\to\Obs} P(\env^m_g \mid h_mao, \prior') = P(\env^m \mid h_m, \prior'),
\end{align*}
by marginalising over the $g$.
Hence
\begin{align*}
    \sum_{\env^m, g} P(\env^m_g \mid h_m ao,\prior') \envpri'(\env^m) =& \sum_{\env^m} P(\env^m\mid h_m,\prior') \envpri'(\env^m) \\
    =& e_{\vd'}(h_m).
\end{align*}

So, if we now want to compute $e_{\vd'}$ on the history $h_mao$, we get:
\begin{align*}
    e_{\vd'}(h_mao) = & \sum_{\env^m, g} P(\env^m_g \mid h_mao,\prior') \envpri'(\env^m_g)\\
    =&  \sum_{\env^m, g} P(\env^m_g \mid h_mao,\prior')\\
    &\Big(\envpri'(\env^m) + \sum_{a^{m+1}} \tau(\env^{m}_g(a^{m+1})a^{m+1})\Big)\\
    =& e_{\vd'}(h_m) + \\
    & \sum_{\env^m, g, a^{m+1}} P(\env^m_g \mid h_mao,\prior') \tau(\env^{m}_g(a^{m+1})a^{m+1}).
\end{align*}
Let $b^m = a(h_m)$ and $o^m=o(h_m)$.
Then for any $a^{m+1}\neq b^m a$,
\begin{gather*}
    \sum_{\env^m, g} P(\env^m_g \mid h_mao,\prior') \tau(\env^{m}_g(a^{m+1})a^{m+1})\\
    =\\
    \sum_{\substack{q^m,o',\env^m,g:\\ \env^m(a^{m+1})=q^m \\ g(a^{m+1})=o'}} P(\env^m_g \mid h_mao,\prior') \tau(a^{m+1} q^{m}o').
\end{gather*}

For fixed $\env^m$ and $q^m$ with $\env^m(a^{m+1})=q^m$, this is
\begin{gather*}
    \sum_{\substack{o',g:\\ g(a^{m+1})=o'}} P(\env^m_g \mid h_mao,\prior') \tau(a^{m+1} q^{m}o').
\end{gather*}
Marginalising over $g$ and using \Cref{prob:eq} gives, for some constant $C$:
\begin{gather*}
    \sum_{\substack{o',g:\\ g(a^{m+1})=o'}} C \times P(o'\mid \env^m(a^{m+1}) a^{m+1}) \tau(a^{m+1} q^{m}o'),
\end{gather*}
which is zero by \Cref{unrig:tau}.
Summing over zero remains zero, so we can ignore all terms where $a^m \neq b^m a$, hence
\begin{align*}
    e_{\vd'}(h_mao)=& e_{\vd'}(h_m)\\
    &+ \sum_{\env^m, g} P(\env^m_g \mid h_mao,\prior') \tau(\env^{m}_g(b^{m}a)b^{m}a)\\
    &= e_{\vd'}(h_m) + \sum_{\env^m, g} P(\env^m_g \mid h_mao,\prior') \tau(h_mao) \\
    &= e_{\vd'}(h_m) + \tau(h_mao) \\
    &= e_{\vd'}(h_m) + e_{\vd}(h_mao) - e_{\vd}(h_m).
\end{align*}
By the induction hypothesis, $e_{\vd'}(h_m) = e_{\vd}(h_m)$, so we have shown that for all $h_m$, $a$, and $o$,
\begin{align*}
    e_{\vd'}(h_mao)=e_{\vd}(h_mao).
\end{align*}
Consequently, by induction, $e_{\vd'}=e_{\vd}$ on all of $\Hist$: so $\vd$ and $\vd'$ have same expectation.

Now, $\vd'$ is uninfluenceable by construction, coming from $\envpri'$ and $\prior'\in\Delta(\Env')$, so this completes the proof.

\subsection{Example: parental career instruction}\label{parent:example:unrig}

Take the unriggable example in \Cref{unrig:examp}, ie the one with prior $\prior_2$ which puts equal probability on all environments in $\Env=\{\env_{BB},\env_{BD},\env_{DB},\env_{DD}\}$.

Then\begin{align*}
    \envpri'(\env^0)= e_{\vd}(h_0) = \frac{1}{2}\left(R_B+R_D\right),
\end{align*}
and hence $\tau(MB) = \tau(FB) = \frac{1}{2}\left(R_B - RH\right)$, while $\tau(MD) = \tau(FD) = \frac{1}{2}\left(R_D - RC\right)$.

So, on $\env_{BB}$,
\begin{align*}
    \envpri'(\env_{BB}) =& \envpri'(\env^0) + \sum_{a\in \Act} \tau(aC)\\
    =& +\frac{3}{2}R_B -\frac{1}{2}R_D.
\end{align*}
Similarly,
\begin{align*}
    \envpri'(\env_{BD})=\envpri'(\env_{DB}) =& +\frac{1}{2}R_B +\frac{1}{2}R_D\\
    \envpri'(\env_{DD})=& -\frac{1}{2}R_B +\frac{3}{2}R_D.
\end{align*}

Then one can check that, for example,
\begin{gather*}
    e_{\vd'}(MB)\\
    =\\
    \sum_{R\in\Rew} P(R \mid \envpri', \env) P(\env\mid MB, \prior)\\
    =\\
    \frac{1}{2}(+\frac{3}{2}R_B -\frac{1}{2}R_D) + \frac{1}{2}(+\frac{1}{2}R_B +\frac{1}{2}R_D)\\
    =\\
    R_B.
\end{gather*}


These $\envpri'$ and $\vd'$ and not unique; for example, the following $\envpri'$ works equally well:
\begin{align*}
    \envpri'(\env_{BB}) =& 2R_B\\
    \envpri'(\env_{BD}) =& 0\\
    \envpri'(\env_{DB}) =& 0\\
    \envpri'(\env_{DD}) =& 2R_D.
\end{align*}

\subsubsection{Total information}
Consider the following variant of the $\prior_2$ setup: the parents write down their answers, the agent then choose $M$ (mother) or $F$ (father), and then both answers are revealed -- the relevant one from the chosen parent, and the irrelevant one.

The $\vd$ can be defined by:
\begin{align*}
    P(R_B\mid M(BB), \vd)=P(R_B\mid M(BD), \vd)=&1\\
    P(R_B\mid F(BB), \vd)=P(R_B\mid F(DB), \vd)=&1\\
    P(R_D\mid M(DD), \vd)=P(R_D\mid M(DB), \vd)=&1\\
    P(R_D\mid M(DD), \vd)=P(R_D\mid F(BD), \vd)=&1.
\end{align*}

The construction of $\envpri'$ above will not work here: the agent will know the underlying environment because of the second observation, so we get results like $\envpri'(\env_{BD})=R_B$ and $\envpri'(\env_{BD})=R_D$, a contradiction.

However, \Cref{unrig:to:uninf:theorem} still applies.
What happens is that we need to extend $\Env$ to $\Env'$ by including twelve impossible environments: environments where the agent's actions changes what the parents have already written.
These are environments such as $\env_{M\to(BD),F\to(BB)}$.

The $\prior'$ generated then gives equal probability $1/16$ to each of these $16$ environments; note that even though twelves of these environments are impossible, the $\prior'$ still describes the original setup; the impossible environment $\env_{M\to(BD),F\to(BB)}$ is compensated for by the impossible environment $\env_{M\to(BB),F\to(BD)}$, giving a very possible distribution over observations.

Write $\env_{M\to(BD),F\to(BB)}$ as $\env_{(BD)(BB)}$.
Then, as above, $\envpri'(\env^0)=e_\vd(h_0)=\frac{1}{2}\left(R_B + R_D\right)$.

Then the environments $\env_{BB}$, $\env_{(BB)(DB)}$, $\env_{(BD)(DB)}$, and $\env_{(BD)(BB)}$ all have the same image under $\envpri'$, namely $+\frac{3}{2}R_B -\frac{1}{2}R_D$. Similarly, the environments $\env_{DD}$, $\env_{(DD)(BD)}$, $\env_{(DB)(BD)}$, and $\env_{(DB)(DD)}$ all map to $-\frac{1}{2}R_B +\frac{3}{2}R_D$, and the remaining eight environments get mapped to $+\frac{1}{2}R_B +\frac{1}{2}R_D$.

Now the $\vd'$ proceeds as before; the difference is that $h_1=F(BB)$, for example, no longer fixes the underlying environment.
Four environments are compatible with it: $\env_{BB}$, $\env_{(BB)(BD)}$, $\env_{(BB)(DB)}$, and $\env_{(BB)(DD)}$; and the average of the $\envpri'$ of these four environments is $R_B$, which is non-coincidentally equal to $e_{\vd}(F(BB))$.

Thus the addition of impossible environments allows $e_\vd$ to be expressed as $e_\vd'$ for $\vd'$ an uninfluenceable learning process.

\end{document}